\DeclareMathOperator*{\minimize}{minimize}
\newcommand{\field}[1]{\mathbb{#1}}
\newcommand{\R}{\field{R}}
\newcommand{\bmtx}{\begin{bmatrix}}
\newcommand{\emtx}{\end{bmatrix}}
\newcommand{\bsmtx}{\left[ \begin{smallmatrix}} 
\newcommand{\esmtx}{\end{smallmatrix} \right]} 
\newcommand{\bmatarray}[1]{\left[\begin{array}{#1}}
\newcommand{\ematarray}{\end{array}\right]}
\title[Analysis of Stochastic Optimization Methods Using Jump System Theory]{A Unified Analysis of Stochastic Optimization Methods Using \\Jump System Theory and Quadratic Constraints}
\begin{document}

\maketitle

\begin{abstract}
We develop a simple routine unifying the analysis of several important recently-developed stochastic optimization methods including SAGA, Finito, and stochastic dual coordinate ascent (SDCA).   First, we show an intrinsic connection between stochastic optimization methods and dynamic jump systems, and propose a general jump system model for stochastic optimization methods. Our proposed model recovers SAGA, SDCA, Finito, and SAG as special cases.  Then we combine jump system theory with several simple quadratic inequalities to derive sufficient conditions for convergence rate certifications of the proposed jump system model under various assumptions (with or without individual convexity, etc). The derived conditions are linear matrix inequalities (LMIs) whose sizes roughly scale with the size of the training set. We make use of the symmetry in the stochastic optimization methods and reduce these LMIs to some equivalent small LMIs whose sizes are at most $3\times 3$.
 We solve these small LMIs to provide analytical proofs of
new convergence rates for SAGA, Finito and SDCA (with or without individual convexity).  We also explain why our proposed LMI fails in analyzing SAG. We reveal a key difference between SAG and other methods, and briefly discuss how to extend our LMI analysis for SAG. 
An advantage of our approach is that the proposed analysis can be automated for a large class of stochastic methods under various assumptions (with or without individual convexity, etc).
\end{abstract}

\begin{keywords}
  Empirical risk  minimization, SAGA, Finito, SDCA, SAG, semidefinite programming, jump systems, quadratic constraints, control theory
\end{keywords}

\section{Introduction}

Convergence proofs for optimization methods are typically derived in a case-by-case manner. It is an important task to develop more unifying analysis which can be automatically generalized for complicated algorithms.
The aim of this paper is to develop a unified analysis routine for a class of recently-developed stochastic optimization methods used in empirical risk minimization.
Consider the following finite sum minimization
\begin{align}
\label{eq:minP}
\minimize_{x\in \R^p} \,\, g(x):=\frac{1}{n} \sum_{i=1}^n f_i(x)
\end{align}
where $g: \R^p\rightarrow \R$ is the objective function.  The framework of ~\eqref{eq:minP} is useful for empirical risk minimization problems, e.g. $\ell_2$-regularized logistic regression problems \citep{teo2007}.
%

A widely-used approach for solving~\eqref{eq:minP} is the stochastic gradient (SG) method \citep{robbins1951, Bottou2004}.
However, the SG method only linearly converges to some tolerance of the optimum of~\eqref{eq:minP} given a well-chosen constant stepsize.  If a diminishing stepsize is used, the SG method will converge to the optimum but at a sublinear rate.

More recently, a class of new stochastic optimization methods have been proposed based on the idea of gradient aggregation. These methods converge linearly to the optimum point while preserving the iteration cost of the SG method. This family of gradient aggregation methods include SAG \citep{Roux2012, Schmidt2013}, SAGA \citep{defazio2014}, Finito \citep{defazio2014finito}, SDCA \citep{shalev2013, Shai2016} and
SVRG \citep{johnson2013}. 
Existing linear rate bounds of SAG, SAGA, Finito, SDCA and SVRG are derived in a case-by-case manner.  
Moreover, the existing rate results for SAG, SAGA and Finito require the  individual convexity of $f_i$. 
It is beneficial to develop a unified analysis framework which can be used to justify the existing rate results and  obtain
new rate bounds under various conditions (with or without individual convexity, etc).

Recently, semidefinite programs have been used to certify the performance of deterministic optimization methods \citep{drori2014, kim2016, Lessard2014, nishihara2015, taylor2017}.  Specifically, \citet{Lessard2014} provides a general analysis for deterministic first-order optimization methods (full gradient method, Nesterov's method, heavy ball method, etc) by adapting the integral quadratic constraint (IQC) framework \citep{Megretski1997} from control theory. 
The key insight there is that the deterministic first-order methods can be viewed as interconnections of a linear time-invariant (LTI) dynamic system and a nonlinearity. Then quadratic inequalities can be used to characterize the nonlinearity and formulate LMI conditions.

In this paper, we present a unified analysis framework for a large class of stochastic optimization methods including SAGA, Finito and SDCA.
Our approach here is inspired by the work of \citet{Lessard2014}, and can be viewed as its stochastic extension.
In our paper, the key insight is that many stochastic first-order methods can be viewed as an interconnection of a linear jump system and a static nonlinearity.  
Notice that a linear jump system is described by a linear state space model whose state matrices are functions of a jump parameter sampled from a given distribution.
Since Lyapunov theory for jump systems has been well established in the controls field, we can incorporate quadratic constraints to obtain semidefinite programs for linear rate analysis of these stochastic optimization methods.  Our main contributions are summarized as follows.
\begin{enumerate}
\item We present a unified jump system perspective on SAG, SAGA, Finito and SDCA. Specifically, we propose a general jump system model which governs the dynamics of a large family of stochastic methods including SAG, SAGA, Finito and SDCA. 
\item We present a unified (and in some sense even automated) analysis framework for SAGA, Finito and SDCA using jump system perspectives and quadratic constraints.  LMI conditions for a large class of stochastic methods under various conditions (with or without individual convexity, etc) are derived using one technique, and then solved to provide rate certificates.
\item We  analytically solve the resultant LMIs  to prove linear rate bounds for SAGA, Finito, and SDCA under different assumptions on $g$ and $f_i$. Our results provide alternative proofs for many existing rate bounds. In addition, we prove that SAGA without individual convexity achieves an $\epsilon$-optimal iteration complexity $\tilde{\mathcal{O}}\left((\frac{L^2}{m^2}+n)\log(\frac{1}{\epsilon})\right)$. We also prove Finito without individual convexity achieves an $\epsilon$-optimal complexity of $\tilde{\mathcal{O}}\left(n\log(\frac{1}{\epsilon})\right)$ if $n\ge \frac{48L^2}{m^2}$.  
\item Our quadratic constraint approach reveals a key difference between SAG and other methods. Specifically, SAGA, SDCA, and Finito only require simple quadratic inequalities used in this paper while SAG further requires more advanced quadratic inequalities to decode convexity. For this reason, the analysis of SAG is more involved, and our proposed LMI fails in analyzing SAG. We briefly sketch how to extend our LMI analysis for SAG. The extension requires incorporating more advanced quadratic inequalities into the LMI formulations.
\end{enumerate}


 The main advantage of our framework  is its flexibility.
The existing analysis for SAG, SAGA, Finito and SDCA is derived in a case-by-case manner.   Our jump system framework  provides a unified routine for analysis of such methods. Our analysis is highly repeatable and even ``automated" in the sense that all LMI conditions are formulated using one technique and can be numerically solved to guide our analytical rate proof constructions.
We emphasize that we view our LMI-based method as a complement rather than a replacement for existing proof techniques.  One can always solve our proposed LMIs numerically and use the numerical results to narrow down possible Lyapunov function structures and useful function inequalities even before trying to construct proofs. This complements several existing proof techniques which more or less require  guessing the required Lyapunov functions at the early stage of proof constructions.
 We will further explain this point after our main LMI condition is presented.

The rest of the paper is organized as follows. Section~\ref{sec:back} introduces the notation and reviews the concepts of linear jump systems.
 In Section~\ref{sec:ASAG}, we present a general jump system model which governs the dynamics of  a large family of stochastic optimization methods including SAG, SAGA, Finito and SDCA.    Section \ref{sec:IQCmain} presents a unified LMI analysis for the proposed jump system model. A unified LMI condition is derived using jump system theory and several function properties in the form of simple quadratic constraints.  We apply the LMI condition and successfully prove various rate bounds for SAGA, SDCA, and Finito with or without individual convexity. We also explain why our proposed LMI fails in analyzing SAG. We reveal a key difference between SAG and other methods, and briefly discuss how to extend our LMI analysis for SAG. 
 We present the main technical proofs in Section \ref{sec:MainProof}. 
   Finally, we conclude with several future directions (Section~\ref{sec:ConR}).

\section{Preliminaries}
\label{sec:back}
\subsection{Notation and Background}


The set of $p$-dimensional real vectors is denoted as $\R^p$. 
The $p\times p$ identity matrix and the $p \times p$ zero matrix are denoted as $I_p$ and $0_p$, respectively. The $n\times n$ identity matrix is denoted as $I_n$, and the $n\times n$ zero matrix is denoted as $0_n$. Let $e_i$ denote the $n$-dimensional vector whose entries are all $0$ except the $i$-th entry which is $1$. Let $e$ denote the $n$-dimensional vector whose entries are all $1$. Let $\tilde{0}$ denote the $n$-dimensional vector whose entries are all $0$. For simplicity,  $0$ is occasionally used to denote a zero vector or a zero matrix when there is no confusion on the dimension.
The Kronecker product of two matrices $A$ and $B$ is denoted by $A \otimes B$. Notice $(A\otimes B)^T=A^T \otimes B^T$ and $(A\otimes B)(C\otimes D)=(AC)\otimes (BD)$ when the matrices have compatible dimensions. When a matrix $P$ is negative semidefinite (definite), we will use the notation $P\le (<) 0$. When $P$ is positive definite, we use the notation $P>0$.

A continuously differentiable function $f:\R^p\rightarrow \R$ is $L$-smooth if  for all $x, y\in \R^p$ we have
$\|\nabla f(x)-\nabla f(y)\|\le L \|x-y\|$.
The continuously differentiable function $f$ is said to be $m$-strongly convex if  for all $x,y \in \R^p$ we have
$f(x)\ge f(y)+\nabla f(y)^T (x-y)+\frac{m}{2} \|x-y\|^2$.
Notice $f$ is said to be convex if  $f$ is $0$-strongly convex.
Let $\mathcal{F}(m,L)$ denote the set of continuously differentiable
functions $f: \R^p \rightarrow \R$  that are $L$-smooth and $m$-strongly convex. Hence $\mathcal{F}(0,L)$ denotes the set of continuously differentiable convex
functions that are $L$-smooth.

For any $f \in \mathcal{F}(m,L)$ with $m>0$, there exist a unique $x^*\in \R^p$ such that $\nabla f(x^*)=0$. In addition, the following inequality holds for any $x\in \R^p$ \cite[Proposition 5]{Lessard2014}
\begin{align}
\label{eq:gradient3}
\bmtx x-x^* \\ \nabla f(x)\emtx^T \bmtx -2mL I_p & (L+m) I_p \\  (L+m) I_p & -2 I_p\emtx  
\bmtx x-x^* \\ \nabla f(x)\emtx\ge 0
\end{align}

However, a function satisfying the above inequality may not belong to $\mathcal{F}(m,L)$, and may not even be convex. The set of continuously differentiable functions satisfying \eqref{eq:gradient3} with some unique global minimum $x^*$  is denoted as $\mathcal{S}(m,L)$. This class of functions has sector-bounded gradients, and includes $\mathcal{F}(m,L)$ as its subset. We emphasize that the functions in $\mathcal{S}(m,L)$ may not be convex. 

A general assumption adopted in this paper is that $g\in \mathcal{S}(m, L)$ with $m>0$. This is weaker than the assumption $g\in \mathcal{F}(m,L)$. Three sets of assumptions are typically used for $f_i$, i.e. $f_i \in \mathcal{F}(m, L)$, $f_i \in \mathcal{F}(0, L)$ or $f_i$ being $L$-smooth.
Given an arbitrary reference point $x^*$ (the value of $\nabla f_i(x^*)$ may not be $0$) and any $x\in \R^p$, the following inequality always holds 
\begin{align}
\label{eq:gradientfi}
\bmtx x-x^* \\ \nabla f_i(x)-\nabla f_i(x^*)\emtx^T \bmtx 2L\gamma I_p & (L-\gamma) I_p \\  (L-\gamma) I_p & -2 I_p\emtx  
\bmtx x-x^* \\ \nabla f_i(x)-\nabla f_i(x^*)\emtx\ge 0
\end{align}
where $\gamma$ is determined by the assumptions on $f_i$ as follows
\begin{align}
  \label{eq:gammav}
  \gamma :=  \left\{
    \begin{array}{ll}
      -m & \mbox{if } f_i\in \mathcal{F}(m,L)\\
      0 & \mbox{if } f_i\in \mathcal{F}(0, L)\\
     L & \mbox{if } f_i \mbox{ is } L\mbox{-smooth} 
    \end{array}
  \right..
\end{align}
Notice \eqref{eq:gradientfi} is just a summary of the definition of $L$-smoothness and the so-called co-coercivity condition \citep[Proposition 5]{Lessard2014}.

Finally, the underlying probability space for the sampling index $i_k$ is denoted as $(\Omega, \mathcal{F}, \mathbb{P})$. Let $\mathcal{F}_k$ be the $\sigma$-algebra generated by $(i_1, i_2, \ldots, i_k)$. Clearly, $i_k$ is $\mathcal{F}_k$-adapted and we obtain a filtered probability space $(\Omega, \mathcal{F}, \{\mathcal{F}_k\}, \mathbb{P})$ which the stochastic method is defined on.

\subsection{Stochastic Jump Systems}
\label{sec:CJS}

A linear jump system is described by the following set of recursive equations:
\begin{align}
\begin{split}
\label{eq:jump1}
\xi^{k+1}&=A_{i_k}\xi^k+B_{i_k}w^k\\
v^k&=C_{i_k} \xi^k+D_{i_k}w^k.
\end{split}
\end{align}
At each step $k$, the jump parameter $i_k$ is a random variable taking value in a finite set $\mathcal{N}=\{1,\cdots,n\}$.
In addition, $A_{i_k}:\mathcal{N}\rightarrow \R^{n_\xi\times n_\xi}$,  $B_{i_k}:\mathcal{N}\rightarrow \R^{n_\xi\times n_w}$, $C_{i_k}:\mathcal{N}\rightarrow \R^{n_v\times n_\xi}$, and $D_{i_k}:\mathcal{N}\rightarrow \R^{n_v\times n_w}$ are functions of $i_k$.
When $i_k=i\in \mathcal{N}$, clearly we have $A_{i_k}=A_i$, $B_{i_k}=B_i$, $C_{i_k}=C_i$, and $D_{i_k}=D_i$. 
If the process $\{i_k: k=1,2,\ldots\}$ is a Markov chain, the resultant jump system \eqref{eq:jump1} is termed as a discrete-time Markovian jump linear system (MJLS). There is a large body of literature on MJLS in the controls field \citep{costa2006, dragan2010}. We confine our scope to the special case where $i_k$ is an identically and independently distributed (IID) process, i.e. $\mathbb{P}(i_k=i \, \vert \, \mathcal{F}_{k-1})=\mathbb{P}(i_k=i)$ for all $k\ge 0$ and $i\in \mathcal{N}$.
When $i_k$ is sampled from a uniform distribution, we have $\mathbb{P}(i_k=i)=\frac{1}{n}$.
When $i_k$ is generated cyclically based on a deterministic order, \eqref{eq:jump1} is not a jump system but a linear periodic system. There is also a large body of control literature on linear periodic systems \citep{bittanti2008}. When $i_k$ is a constant, then the state matrices are constant matrices and the model \eqref{eq:jump1} is just an LTI system. LTI system theory is also well established ~\citep{hespanha2009}.

\section{A General Jump System Model for Stochastic Optimization Methods}
\label{sec:ASAG}
Now we introduce the following general jump system model which governs the dynamics of a large family of stochastic optimization methods.
\begin{align}
\label{eq:spfdJump0}
\begin{split}
\xi^{k+1}&=A_{i_k} \xi^k +B_{i_k} w^k\\
v^k&=C \xi^k\\
w^k&=\bmtx \nabla f_1(v^k) \\ \nabla f_2(v^k) \\ \vdots \\ \nabla f_n(v^k)\emtx
\end{split}
\end{align}
The above model builds upon the linear jump system model \eqref{eq:jump1} by further enforcing a nonlinear relationship between $w^k$ and $v^k$, i.e.  $w^k=\bmtx \nabla f_1(v^k)^T \cdots \nabla f_n(v^k)^T \emtx^T$.
We can represent a large family of stochastic optimization methods using the unified jump system model \eqref{eq:spfdJump0} with properly chosen $(A_{i_k}, B_{i_k}, C)$. In this paper, we consider the following stochastic methods.

\begin{enumerate}
\item SAGA \citep{defazio2014}: The iteration rule is the follows
\begin{align}
\label{eq:SAGAx}
x^{k+1}=x^k-\alpha\left(\nabla f_{i_k}(x^k)-y_{i_k}^{k}+\frac{1}{n}\sum_{i=1}^n y_i^{k}\right)
\end{align}
where at each step $k$, a random training example $i_k$ is drawn uniformly from the set $\mathcal{N}$ and 
\begin{align}
  \label{eq:SAGy}
  y_i^{k+1} :=  \left\{
    \begin{array}{ll}
      \nabla f_i(x^k) & \mbox{if } i=i_k\\
      y_i^{k} & \mbox{otherwise}
    \end{array}
  \right..
\end{align}
\item  SAG \citep{Roux2012, Schmidt2013}: The main iteration rule is 
\begin{align}
\label{eq:SAGx}
x^{k+1}=x^k-\alpha\left(\frac{\nabla f_{i_k}(x^k)-y_{i_k}^k}{n}+\frac{1}{n}\sum_{i=1}^n y_i^k\right)
\end{align} 
where at each $k$, $i_k$ is uniformly drawn from the set $\mathcal{N}$ and $y_i^k$ is updated by~\eqref{eq:SAGy}.
\item Finito \citep{defazio2014finito}:  Suppose $x_i^k\in \R^p$ and $y_i^k\in \R^p$ for each $k$ and all $i\in \mathcal{N}$.  At each $k$,  an index $i_k$ is drawn from the set $\mathcal{N}$, and $x_i^{k+1}$ is updated as
\begin{align}
  \label{eq:Finitox}
  x_i^{k+1} :=  \left\{
    \begin{array}{ll}
    \frac{1}{n}\sum_{i=1}^n x_i^k-\alpha\sum_{i=1}^n  y_i^k   & \mbox{if } i=i_k\\
      x_i^{k} & \mbox{otherwise}
    \end{array}
  \right..
\end{align}
where $\alpha$ is the stepsize \footnote{One typical choice of $\alpha$ under the big data condition is $\alpha=\frac{1}{2nm}$.}. Then $y_i^{k+1}$ is updated as
\begin{align}
  \label{eq:Finitoy}
  y_i^{k+1} :=  \left\{
    \begin{array}{ll}
      \nabla f_i(x_i^{k+1}) & \mbox{if } i=i_k\\
      y_i^{k} & \mbox{otherwise}
    \end{array}
  \right..
\end{align}
\item SDCA \citep[Algorithm 1]{Shai2016}: There are several versions of SDCA. For simplicity, we consider SDCA without duality, which solves the $\ell_2$-regularized problem
\begin{align}
\label{eq:minPl2}
\minimize_{x\in \R^p} \,\, g(x):=\frac{1}{n} \sum_{i=1}^n f_i(x)+\frac{m}{2}\|x\|^2
\end{align}
To solve the above problem, SDCA without duality requires updating $x^{k}\in \R^p$ and $y_i^{k+1}\in \R^p$ at each step. It first updates $x^{k}$ using $y_i^k$ as follows
\begin{align}
\label{eq:SDCAx}
x^k=\frac{1}{m n}\sum_{i=1}^n y_i^k
\end{align}
Then $y_i^{k+1}$ is updated as
 \begin{align}
  \label{eq:SDCAy}
  y_i^{k+1} :=  \left\{
    \begin{array}{ll}
      y_i^k-\alpha m n(\nabla f_i(x^k)+y_i^k) & \mbox{if } i=i_k\\
      y_i^{k} & \mbox{otherwise}
    \end{array}
  \right..
\end{align}
where $i_k$ is randomly sampled from $\mathcal{N}$. In the actual computation, the update \eqref{eq:SDCAx} for $k\ge 1$ is performed using the formula  $x^{k}=x^{k-1}-\alpha (\nabla f_{i_{k-1}}(x^{k-1})+y_i^{k-1})$ due to efficiency considerations. However,  \eqref{eq:SDCAx} is more general and governs the updates of SDCA for all $k$.
\end{enumerate}

To represent the above methods in the general jump system model \eqref{eq:spfdJump0}, we can choose the state matrices as $A_{i_k}=\tilde{A}_{i_k}\otimes I_p$, $B_{i_k}=\tilde{B}_{i_k}\otimes I_p$, and $C=\tilde{C}\otimes I_p$ where $\tilde{A}_{i_k}$, $\tilde{B}_{i_k}$ and $\tilde{C}$ are defined according to Table \ref{tab:JumpModel}. 
\begin{table}[h]
\resizebox{1\textwidth}{!}{\begin{minipage}{\textwidth}
\centering
\begin{tabular}{c|c|c|c}
\hline
 Method & $\tilde{A}_{i_k}$ &  $\tilde{B}_{i_k}$ & $\tilde{C}$ \\ \hline\rule{0pt}{8mm}
SAGA & $\bmtx I_n-e_{i_k} e_{i_k}^T & \tilde{0}  \\ -\frac{\alpha}{n}(e-ne_{i_k})^T & 1 \emtx$ & $\bmtx e_{i_k} e_{i_k}^T \\[1mm] -\alpha e_{i_k}^T \emtx$ & $\bmtx \tilde{0}^T & 1 \emtx$  \\[5mm]
 SAG & $\bmtx I_n-e_{i_k} e_{i_k}^T & \tilde{0}  \\ -\frac{\alpha}{n}(e-e_{i_k})^T & 1 \emtx$ & $\bmtx e_{i_k} e_{i_k}^T \\[1mm] -\frac{\alpha}{n}e_{i_k}^T \emtx$ & $\bmtx \tilde{0}^T & 1 \emtx$ \\[5mm]
Finito & $\bmtx I_n- e_{i_k} e_{i_k}^T & \tilde{0}  \\  -\alpha (e_{i_k}e^T) &  I_n- e_{i_k} (e_{i_k}^T-\frac{1}{n}e^T) \emtx$ & $\bmtx e_{i_k} e_{i_k}^T \\ \tilde{0}\tilde{0}^T \emtx$ & $\bmtx  -\alpha e^T  & \frac{1}{n}e^T \emtx$  \\[5mm]
SDCA  & $I_n-\alpha m n e_{i_k} e_{i_k}^T$ & $-\alpha m n e_{i_k} e_{i_k}^T$ & $\frac{1}{mn}e^T$  \\[2mm]  \hline
\end{tabular}
\vspace{0.1in}
\caption{State Matrices for Feedback Representations of SAG, SAGA, Finito, and SDCA}
\label{tab:JumpModel}
\end{minipage} }
\end{table}

For illustrative purposes, we explain the jump system formulation for SAGA. The jump system formulations for SAG, Finito, and SDCA are further explained in Appendix~\ref{sec:JumpF}. For SAGA, we define the stacked vector 
$y^k:=\bmtx (y_1^k)^T  & \cdots  & (y_n^k)^T \emtx^T$.
Then the SAGA gradient update rule \eqref{eq:SAGy} can be rewritten as:
\begin{align}
\begin{split}
\label{eq:SAGyJump}
y^{k+1}=\left((I_n-e_{i_k} e_{i_k}^T)\otimes I_p\right) y^{k}+ \left((e_{i_k} e_{i_k}^T)\otimes I_p\right) w^k
\end{split}
\end{align}
where $w^k=\bmtx \nabla f_1(x^k)^T \cdots \nabla f_n(x^k)^T \emtx^T$.
Notice $\sum_{i=1}^n y_i^{k} = ( e^T \otimes I_p) y^{k}$ and $\nabla f_{i_k}(x^k)-y_{i_k}^k=(e_{i_k}^T\otimes I_p)(w^k-y^k)$.
Thus the iteration rule \eqref{eq:SAGAx} can be rewritten as follows:
\begin{align}
\label{eq:SAGAxJump}
\begin{split}
x^{k+1}&=x^k-\alpha (e_{i_k}^T \otimes I_p)(w^k-y^{k})-\frac{\alpha}{n}(e^T\otimes I_p) y^{k}\\
&=x^k-\frac{\alpha}{n}\left((e-ne_{i_k})^T\otimes I_p\right)y^{k}-\alpha(e_{i_k}^T\otimes I_p) w^k
\end{split}
\end{align}
Now the update rules in \eqref{eq:SAGyJump} and \eqref{eq:SAGAxJump} can be expressed as:
\begin{align}
\label{eq:SAGAG}
\begin{split}
\bmtx y^{k+1} \\ x^{k+1}\emtx&=\bmtx (I_n-e_{i_k} e_{i_k}^T)\otimes I_p & \tilde{0} \otimes I_p \\ -\frac{\alpha}{n}(e-ne_{i_k})^T\otimes I_p & I_p \emtx \bmtx y^{k} \\ x^k \emtx+\bmtx(e_{i_k} e_{i_k}^T)\otimes I_p \\ (-\alpha e_{i_k}^T)\otimes I_p \emtx w^k\\
v^k&=\bmtx \tilde{0}^T \otimes I_p& I_p \emtx \bmtx y^{k}\\ x^k\emtx\\
w^k&=\bmtx \nabla f_1(v^k) \\ \vdots \\ \nabla f_n(v^k) \emtx
\end{split}
\end{align}
which is exactly in the form of the general jump system model \eqref{eq:spfdJump0} with $\xi^k = \bsmtx y^k \\  x^k\esmtx$.  

The computation of $w^k$ at each $k$ requires a full gradient computation (or $n$ individual oracle accesses). However, $B_{i_k}$ is sparse  such that $B_{i_k}w^k$ only involves one individual oracle access.  The low per-iteration cost of stochastic methods is captured by the sparsity of $B_{i_k}$. Most entries of $w^k$ are ``phantom" iterates which facilitates our analysis but do not appear in the actual computation.

Since $g\in \mathcal{S}(m,L)$ with $m>0$, there exists unique $x^*\in \R^p$ satisfying $\nabla g(x^*)=0$. To make \eqref{eq:spfdJump0} a good model for optimization methods, we have to ensure its equilibrium point is related to $x^*$. Define $w^*:=\bmtx \nabla f_1(x^*)^T &\ldots &\nabla f_n(x^*)^T\emtx^T$,  and $v^*:=x^*$. If \eqref{eq:spfdJump0} is an optimization method which converges to $x^*$, then $\xi^k$ should converge to some equilibrium state $\xi^*$ capturing the information of $x^*$ and satisfying
\begin{align}
\label{eq:spfdJumpEqi}
\begin{split}
\xi^*&=A_{i} \xi^* +B_{i} w^*\\
v^*&=C \xi^*\\
w^*&=\bmtx \nabla f_1(v^*)  \\ \vdots \\ \nabla f_n(v^*)\emtx
\end{split}
\end{align}
for all $i\in \mathcal{N}$. Now we set up $\xi^*$ for SAGA, SAG, Finito, and SDCA as follows.

\begin{enumerate}
\item  For SAG and SAGA, we have $\xi^k : = \bsmtx y^k \\  x^k\esmtx$ and $\xi^* : = \bsmtx w^* \\x^*\esmtx$. If we can show that $\xi^k$ converges to $\xi^*$, then we can conclude that $x^k$ converges to $x^*$ and $y_i^k$ converges to $\nabla f_i(x^*)$.
\item For Finito, we have $x^k:=\bmtx (x_1^k)^T  & \cdots  & (x_n^k)^T \emtx^T$, $\xi^*=\bmtx y^k \\ x^k \emtx$ and $\xi^*=\bmtx w^* \\ e \otimes x^* \emtx$. If we can show that $\xi^k$ converges to $\xi^*$, then $y_i^k$ converges to $\nabla f_i(x^*)$ and $x_i^k$ converges to $x^*$.
\item For SDCA (without duality), we have $\xi^k=y^k$ and $\xi^*=-w^*$. For the $\ell_2$-regularized problem \eqref{eq:minPl2} with strongly-convex $g$, the optimal point $x^*$  satisfies
$mx^*+\frac{1}{n}\sum_{i=1}^n \nabla f_i(x^*)=0$.  Hence, if $\xi^k$ converges to $\xi^*$, then $y_i^k$ converges to $-\nabla f_i(x^*)$ and $x^k$ converges to $x^*$.
\end{enumerate}
It is straightforward to verify that \eqref{eq:spfdJumpEqi} holds for the above $\xi^*$ due to the fact $\nabla g(x^*)=0$.

\section{Analysis of Stochastic Methods Using Semidefinite Programs}
\label{sec:IQCmain}
\subsection{An Unified LMI Condition for Analysis of Stochastic Methods}
\label{sec:IQCSAG}

%


From the above discussion, we always want to show $\xi^k$ converges to $\xi^*$ at a given linear rate $\rho$. Now
we present a unified LMI condition for such linear convergence using jump system theory and the basic quadratic inequalities \eqref{eq:gradient3} \eqref{eq:gradientfi} which capture the key properties of the loss functions.

\begin{theorem}
\label{thm:lmiS0}
Consider the general jump system model~\eqref{eq:spfdJump0}, where $A_i=\tilde{A}_i\otimes I_p$, $B_i=\tilde{B}_i \otimes I_p$, and $C=\tilde{C}\otimes I_p$.  Assume $i_k$ is sampled in an IID manner from a uniform distribution $\mathbb{P}(i_k=i)=\frac{1}{n}$. Suppose there exists a unique $x^*\in \R^p$ such that $\nabla g(x^*)=0$. 
The function $f_i$ is assumed to satisfy the following two inequalities for any $x\in \R^p$, 
\begin{align}
\label{eq:basicIq1}
&\bmtx x-x^* \\ \frac{\sum_{i=1}^n \nabla f_i(x)}{n}-\frac{\sum_{i=1}^n \nabla f_i(x^*)}{n}\emtx^T \bmtx 2L\nu I_p & (L-\nu) I_p \\  (L-\nu) I_p & -2 I_p\emtx  
\bmtx x-x^* \\ \frac{\sum_{i=1}^n \nabla f_i(x)}{n}-\frac{\sum_{i=1}^n \nabla f_i(x^*)}{n} \emtx\ge 0\\
\label{eq:basicIq2}
&\bmtx x-x^* \\ \nabla f_i(x)-f_i(x^*)\emtx^T \bmtx 2L\gamma I_p & (L-\gamma) I_p \\  (L-\gamma) I_p & -2 I_p\emtx  
\bmtx x-x^* \\ \nabla f_i(x)-f_i(x^*)\emtx\ge 0
\end{align}
where $\nu$ and $\gamma$ are some prescribed scalars. 
Define $\tilde{D}_{\psi 1}\in \R^{2n+2}$ and $\tilde{D}_{\psi 2}\in \R^{(2n+2)\times n}$  as 
\begin{align}
\begin{split}
 \tilde{D}_{\psi 1}&=\bmtx   L  & \nu  & L & \gamma & \ldots &  L &  \gamma \emtx^T \\
 \tilde{D}_{\psi 2}&=\bmtx     -\frac{1}{n}e  & \frac{1}{n}e &  -e_1 & e_1 & \ldots & -e_n & e_n \emtx^T.
\end{split}
\end{align}

 If $\exists$ an $n_\xi\times n_\xi$ matrix
$\tilde{P}=\tilde{P}^T > 0$ and nonnegative scalars $\lambda_1$, $\lambda_2$ such that 
\begin{align}
  \label{eq:lmiIQCS0}
\begin{split}
  \bmtx \frac{1}{n}\sum_{i=1}^n  \tilde{A}_i^T \tilde{P} \tilde{A}_i-\rho^2 \tilde{P}  & \frac{1}{n}\sum_{i=1}^n \tilde{A}_i^T \tilde{P} \tilde{B}_i \\[2mm] 
  \frac{1}{n}\sum_{i=1}^n \tilde{B}_i^T \tilde{P} \tilde{A}_i & \frac{1}{n}\sum_{i=1}^n\tilde{B}_i^T\tilde{P}\tilde{B}_i \emtx 
  +& \\
\bmtx  \tilde{C}^T\tilde{D}_{\psi 1}^T \\ \tilde{D}_{\psi 2}^T \emtx  \left(\bmtx \lambda_1 & \tilde{0}^T \\ \tilde{0} & \frac{\lambda_2}{n} I_n \emtx \otimes  \bmtx 0 & 1\\ 1 & 0\emtx\right)&
  \bmtx \tilde{D}_{\psi 1}\tilde{C} & \tilde{D}_{\psi2} \emtx
  \le 0
\end{split}
\end{align}
then  all $k\ge 1$ and $\xi^0\in \R^{n_\xi}$, the following inequality holds
\begin{align} 
\mathbb{E}\left[(\xi^{k+1}-\xi^*)^T(\tilde{P}\otimes I_p)(\xi^{k+1}-\xi^*)\right] \le \rho^2\mathbb{E}\left[ (\xi^k-\xi^*)^T(\tilde{P}\otimes I_p)(\xi^k-\xi^*)\right].
\end{align}

Consequently, $\mathbb{E}\left[\| \xi^k-\xi^*\|^2\right] \le \rho^{2k}\left( \textup{cond}(\tilde{P})\|\xi^0-\xi^*\|^2\right)$ holds
for all $k\ge 1$ and $\xi^0\in \R^{n_\xi}$, where $\textup{cond}$ denotes the condition number of a given positive definite matrix.
\end{theorem}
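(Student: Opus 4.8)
The plan is to read the LMI~\eqref{eq:lmiIQCS0} as a one-step dissipation inequality for the candidate Lyapunov function $V(\xi):=(\xi-\xi^*)^T(\tilde P\otimes I_p)(\xi-\xi^*)$. Write $\hat\xi^k:=\xi^k-\xi^*$, $\hat w^k:=w^k-w^*$, and $\hat v^k:=v^k-v^*=C\hat\xi^k$. Subtracting the equilibrium relations~\eqref{eq:spfdJumpEqi} from~\eqref{eq:spfdJump0} yields the error dynamics $\hat\xi^{k+1}=A_{i_k}\hat\xi^k+B_{i_k}\hat w^k$ with $\hat v^k=C\hat\xi^k$. The first observation is that $\xi^k$ and $w^k$ are $\mathcal{F}_{k-1}$-measurable (they are determined by $i_1,\dots,i_{k-1}$), so that conditioned on $\mathcal{F}_{k-1}$ the only randomness in $\hat\xi^{k+1}$ is $i_k$. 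Using the IID uniform sampling and the Kronecker identity $(\tilde A_i\otimes I_p)^T(\tilde P\otimes I_p)(\tilde A_i\otimes I_p)=(\tilde A_i^T\tilde P\tilde A_i)\otimes I_p$, I would compute $\mathbb{E}[V(\xi^{k+1})\mid\mathcal{F}_{k-1}]$ as the quadratic form in $\bsmtx \hat\xi^k\\ \hat w^k\esmtx$ whose reduced kernel is the first summand $\Xi_1$ of~\eqref{eq:lmiIQCS0} (with the $-\rho^2\tilde P$ absorbed into the top-left block); concretely, $\mathbb{E}[V(\xi^{k+1})\mid\mathcal{F}_{k-1}]-\rho^2V(\xi^k)=\bsmtx\hat\xi^k\\\hat w^k\esmtx^T(\Xi_1\otimes I_p)\bsmtx\hat\xi^k\\\hat w^k\esmtx$.

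The crux is to show that the second summand $\Xi_2$ of~\eqref{eq:lmiIQCS0} encodes a nonnegative combination of the quadratic constraints. I would set $\eta^k:=(\tilde D_{\psi1}\tilde C\otimes I_p)\hat\xi^k+(\tilde D_{\psi2}\otimes I_p)\hat w^k$ and read off its blocks: the first pair is $L\hat v^k-\frac1n\sum_i(\nabla f_i(v^k)-\nabla f_i(v^*))$ alongside $\nu\hat v^k+\frac1n\sum_i(\nabla f_i(v^k)-\nabla f_i(v^*))$, and for each $i$ a pair $L\hat v^k-(\nabla f_i(v^k)-\nabla f_i(v^*))$ alongside $\gamma\hat v^k+(\nabla f_i(v^k)-\nabla f_i(v^*))$. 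Because the middle weight $\Lambda=\mathrm{diag}(\lambda_1,\frac{\lambda_2}{n}I_n)\otimes\bsmtx 0&1\\1&0\esmtx$ is block off-diagonal, the induced quadratic form is the sum of the cross products $2a_jb_j$ of these paired blocks, weighted by $\lambda_1$ (first pair) and $\frac{\lambda_2}{n}$ (the $n$ remaining pairs). A direct expansion shows each cross product reproduces the left-hand side of the corresponding constraint: $2(La-s)(\nu a+s)=2L\nu a^2+2(L-\nu)as-2s^2$ matches~\eqref{eq:basicIq1} (with $a=\hat v^k$ and $s$ the averaged gradient error), and $2(La-b_i)(\gamma a+b_i)$ matches~\eqref{eq:basicIq2} for index $i$. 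Hence $\bsmtx\hat\xi^k\\\hat w^k\esmtx^T(\Xi_2\otimes I_p)\bsmtx\hat\xi^k\\\hat w^k\esmtx$ equals $\lambda_1$ times~\eqref{eq:basicIq1} plus $\frac{\lambda_2}{n}$ times the sum over $i$ of~\eqref{eq:basicIq2}, which is $\ge 0$ since $\lambda_1,\lambda_2\ge 0$.

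With these two identifications in hand, the conclusion follows by a standard dissipation argument. Since $\Xi:=\Xi_1+\Xi_2\le 0$ by hypothesis, tensoring preserves the sign, $\Xi\otimes I_p\le 0$, so $\bsmtx\hat\xi^k\\\hat w^k\esmtx^T(\Xi\otimes I_p)\bsmtx\hat\xi^k\\\hat w^k\esmtx\le 0$. Splitting this sum into the two pieces above and discarding the nonnegative $\Xi_2$-piece gives $\mathbb{E}[V(\xi^{k+1})\mid\mathcal{F}_{k-1}]-\rho^2V(\xi^k)\le 0$. Taking total expectations yields the first claimed inequality; iterating from the deterministic $\xi^0$ produces $\mathbb{E}[V(\xi^k)]\le\rho^{2k}V(\xi^0)$. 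The final bound then follows from the sandwich $\lambda_{\min}(\tilde P)\|\cdot\|^2\le V(\cdot)\le\lambda_{\max}(\tilde P)\|\cdot\|^2$ (the eigenvalues of $\tilde P\otimes I_p$ coincide with those of $\tilde P$), which converts $\mathbb{E}[V(\xi^k)]\le\rho^{2k}V(\xi^0)$ into $\mathbb{E}[\|\xi^k-\xi^*\|^2]\le\rho^{2k}\,\textup{cond}(\tilde P)\|\xi^0-\xi^*\|^2$.

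The step I expect to be the main obstacle is the bookkeeping in the second paragraph: verifying that the factored form $\bsmtx\tilde C^T\tilde D_{\psi1}^T\\\tilde D_{\psi2}^T\esmtx\Lambda\bsmtx\tilde D_{\psi1}\tilde C & \tilde D_{\psi2}\esmtx$ unpacks term-by-term into exactly the constraints~\eqref{eq:basicIq1}--\eqref{eq:basicIq2}, matching the $\frac1n$ weights and the off-diagonal $\bsmtx0&1\\1&0\esmtx$ pattern against the $2L\nu,\,(L-\nu),\,-2$ and $2L\gamma,\,(L-\gamma),\,-2$ entries. The remaining steps — the conditional-expectation computation and the Kronecker manipulations — are routine once the $\mathcal{F}_{k-1}$-measurability of $w^k$ is noted.
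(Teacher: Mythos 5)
Your proposal is correct and follows essentially the same route as the paper's proof in Section~\ref{sec:ProofMain1}: shift to the error coordinates, compute the conditional expectation of the quadratic Lyapunov function over the uniform IID sample to recover the first block of~\eqref{eq:lmiIQCS0}, identify the second block as the nonnegative combination $\lambda_1\cdot\eqref{eq:basicIq1}+\frac{\lambda_2}{n}\sum_i\eqref{eq:basicIq2}$ via the factorization of the sector matrices, and conclude by the standard dissipation and condition-number arguments. The term-by-term unpacking you flag as the main bookkeeping obstacle checks out exactly as you describe, so no gap remains.
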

\begin{proof}
A detailed proof is presented in Section \ref{sec:ProofMain1}. Here we briefly sketch the proof idea.
Denote $P=\tilde{P}\otimes I_p$, and
  define a Lyapunov function by
  $V(\xi^k)= (\xi^k-\xi^*)^TP(\xi^k-\xi^*)$.
Then one can use the LMI condition and the basic quadratic inequalities \eqref{eq:basicIq1} \eqref{eq:basicIq2} to show that $V$ satisfies
$\mathbb{E}V(\xi^{k+1})-\rho^2 \mathbb{E}V(\xi^k) \le 0$.
This immediately leads to the desired conclusion. We can see the LMI condition gives us an automated way to search quadratic Lyapunov functions.
\end{proof}

The initial condition $\|\xi^0-\xi^*\|^2$ is related to the so-called variance term since $\xi^*$ is typically determined by $x^*$ and $\nabla f_i(x^*)$. When $\rho^2$ is given, the testing condition \eqref{eq:lmiIQCS0} is linear in $\tilde{P}$, $\lambda_1$, and $\lambda_2$. Therefore, \eqref{eq:lmiIQCS0} is an LMI whose feasible set is convex and can be effectively searched using the state-of-the-art convex optimization techniques, e.g. interior point method.  Many optimization solvers are available such that coding this  LMI condition  is a straightforward task. 

One can automate the proposed LMI analysis of stochastic optimization methods by modifying the values of $\nu$ and $\gamma$ to reflect various assumptions on $g$ and $f_i$. For SAG, SAGA and Finito, we always assume $g\in \mathcal{S}(m, L)$ with $m>0$ and hence we should set $\nu=-m$ in our analysis. The value of $\gamma$ is chosen based on the assumptions on $f_i$ as follows.
\begin{align*}
  \gamma =  \left\{
    \begin{array}{ll}
      -m & \mbox{if } f_i\in \mathcal{F}(m,L)\\
      0 & \mbox{if } f_i\in \mathcal{F}(0, L)\\
     L & \mbox{if } f_i \mbox{ is } L\mbox{-smooth} 
    \end{array}
  \right.
\end{align*}

For SDCA, \eqref{eq:minPl2} is considered. We assume $\frac{1}{n}\sum_{i=1}^n f_i\in \mathcal{F}(0,L)$. By co-coercivity, we can set $\nu=0$. In addition, we have $\gamma=0$ if $f_i\in \mathcal{F}(0, L)$ and $\gamma=L$   if $f_i$ is only assumed to be $L$-smooth.

\subsection{Numerical Pre-Analysis of Stochastic Methods Using Semidefinite Programs}
Theorem~\ref{thm:lmiS0} provides a simple unified tool for linear rate analysis of stochastic optimization methods governed by the general jump system model \eqref{eq:spfdJump0}. In principle, one can implement LMI \eqref{eq:lmiIQCS0} once. Then given a stochastic method \eqref{eq:spfdJump0}, one only needs to modify the $(\tilde{A}_i, \tilde{B}_i, \tilde{C})$ matrices in the code.
 Notice the size of the LMI condition \eqref{eq:lmiIQCS0} scales proportionally with $n$, and hence we can only solve LMI \eqref{eq:lmiIQCS0} numerically for $n$ up to several hundred. However, these numerical results with $n$ being several hundred 
provide informative clues for further proof constructions.
Notice the following two questions are important when analyzing a finite-sum method using Lyapunov arguments:
\begin{enumerate}
\item Which inequalities describing the function properties should be used in the proof?
\item What is the simplest form of Lyapunov function required by the proof? 
\end{enumerate}

Answers to these questions in the early stage of the analysis can guide researchers in their search for proofs. Usually one has to make a rough guess based on personal expertise. Theorem \ref{thm:lmiS0} provides a complementary numerical tool for this purpose. The numerical feasibility results from LMI \eqref{eq:lmiIQCS0} with $n$ being several hundred roughly answer the questions above by providing clues for selecting related function inequalities and simplified forms of Lyapunov functions.  
 For example, numerical tests of LMI \eqref{eq:lmiIQCS0} for SAGA show that enforcing the Lyapunov function to be diagonal does not change the feasibility results. This suggests using a diagonal Lyapunov function for SAGA.  When analyzing Finito, the numerical tests of \eqref{eq:lmiIQCS0} immediately indicate that Finito requires Lyapunov functions with off-diagonal terms. 
When  we test the existing rate results for SAG \citep[Theorem 1]{Schmidt2013},  LMI \eqref{eq:lmiIQCS0} becomes infeasible. This indicates that the analysis of SAG requires less conservative function inequalities in addition to the simple quadratic inequalities \eqref{eq:basicIq1} \eqref{eq:basicIq2}.
 The details of the numerical tests of LMI \eqref{eq:lmiIQCS0} are presented in Appendix \ref{sec:numerical}. 
Notice our proposed analysis heavily relies on the quadratic constraints used in the LMI formulations. 
Some stochastic methods, e.g. SAGA, SDCA and Finito, are relatively easier to analyze, since they only require the simple quadratic inequalities \eqref{eq:basicIq1} \eqref{eq:basicIq2}. Some other methods, e.g. SAG, are more involved, and require more advanced quadratic constraints in addition to  \eqref{eq:basicIq1} \eqref{eq:basicIq2}. Theorem \ref{thm:lmiS0} provides a simple tool to distinguish these two classes of stochastic methods.
We will further discuss SAG in Section \ref{sec:FDSAG}. Next,  we reduce LMI \eqref{eq:lmiIQCS0} to some equivalent small LMIs for SAGA, Finito, and SDCA.

\subsection{Dimension Reduction for the Proposed LMI}

The preliminary numerical test results of LMI \eqref{eq:lmiIQCS0} actually shed light on possible simplifications of the proposed LMI condition. 
Based on the preliminary numerical tests documented in Appendix \ref{sec:numerical},  it seems that \eqref{eq:lmiIQCS0} is sufficient for analysis of SAGA, Finito, and SDCA. 
As mentioned before, we notice various simplified parameterizations of $\tilde{P}$ are required for different algorithms. These simplified parameterizations seem not to introduce further conservatism into our analysis. The resultant LMI \eqref{eq:lmiIQCS0} with such $\tilde{P}$ consists of blocks which have the special form  $\mu I_n + q ee^T$ where $\mu$ and $q$ are some scalars. 
We summarize our preliminary findings in Table~\ref{tab:NaiveA}.
\begin{table}[h]
\resizebox{1\textwidth}{!}{\begin{minipage}{\textwidth}
\centering
\begin{tabular}{c|c|c}
\hline
 Method & Parameterization of $\tilde{P}$ &  Matrix Form of the Resultant LMI \eqref{eq:lmiIQCS0} \\ \hline\rule{0pt}{8mm}
SAGA  & $\bmtx p_1 I_n & \tilde{0} \\  \tilde{0}^T &  p_2 \emtx$ & $\bmtx \mu_{1} I_n +q_1 ee^T & q_4 e & \mu_6 I_n+q_6 ee^T\\ q_4 e^T & \mu_2  &  q_5 e^T \\ \mu_6 I_n+q_6 ee^T & q_5 e & \mu_3 I_n+ q_3 ee^T\emtx$  \\[5mm]  
SDCA  & $p_1 I_n+p_2 ee^T$ & $\bmtx \mu_{1} I_n +q_1 ee^T & \mu_3 I_n +q_3 ee^T \\ \mu_3 I_n +q_3 ee^T & \mu_2 I_n+q_2 ee^T\emtx$  \\[4mm] 
Finito  & $\bmtx p_1 I_n+p_2 ee^T & p_3 ee^T \\ p_3 ee^T & p_4 I_n+ p_5 ee^T\emtx$ & $\bmtx \mu_{1} I_n +q_1 ee^T & \mu_4 I_n +q_4 ee^T & \mu_6 I_n+q_6 ee^T\\ \mu_4 I_n +q_4 ee^T & \mu_2 I_n+q_2 ee^T & \mu_5 I_n+q_5 ee^T \\ \mu_6 I_n+q_6 ee^T & \mu_5 I_n +q_5 ee^T & \mu_3 I_n+ q_3 ee^T\emtx$  \\[5mm]  \hline
\end{tabular}
\caption{Parameterization of $\tilde{P}$ and Matrix Forms in \eqref{eq:lmiIQCS0} for SAGA, SDCA and Finito}
\label{tab:NaiveA}
\end{minipage} }
\end{table}

The special matrix  form of \eqref{eq:lmiIQCS0} is due to the same assumption on $f_i$ for all $i$ and the uniform sampling of $i_k$.
We can take advantage of the special matrix forms and convert \eqref{eq:lmiIQCS0} into equivalent small LMIs whose sizes do not depend on $n$. For example, we know 
$\bsmtx \mu_{1} I_n +q_1 ee^T & \mu_3 I_n +q_3 ee^T \\ \mu_3 I_n +q_3 ee^T & \mu_2 I_n+q_2 ee^T\esmtx\le 0$
if and only if
$\bsmtx \mu_1 & \mu_3 \\ \mu_3 & \mu_2\esmtx \le 0$
and $\bsmtx \mu_1 & \mu_3 \\ \mu_3 & \mu_2\esmtx + n \bsmtx q_1 & q_3 \\ q_3 & q_2 \esmtx \le 0$. Hence the analysis of SDCA actually involves two coupled $2\times 2$ LMIs. Similar linear algebra tricks can be used to convert \eqref{eq:lmiIQCS0} into equivalent small LMIs for SAGA and Finito. This leads to the following simplified testing conditions.

\begin{theorem}
\label{thm:LMISector}
 Suppose $i_k$ is uniformly sampled and $m>0$. Let a testing rate $0\le \rho\le 1$ be given.
\begin{enumerate}
\item (SAGA): Suppose $g\in \mathcal{S}(m, L)$, and $\gamma$ is defined by \eqref{eq:gammav} based on assumptions on $f_i$. If there exist positive scalars $p_1$, $p_2$, and non-negative scalars $\lambda_1$, $\lambda_2$ such that
\begin{align}
\label{eq:SAGALMI1}
\bmtx p_2 \alpha^2+\left(\frac{n-1}{n}-\rho^2\right)np_1 & -\alpha^2 p_2\\ -\alpha^2 p_2 & p_1+\alpha^2 p_2-2\lambda_2 \emtx \le 0\\
\label{eq:SAGALMI2}
\bmtx   (1-\rho^2)p_2-2\lambda_1 mL +2\lambda_2L\gamma & -\alpha p_2+(m+L)\lambda_1+(L-\gamma)\lambda_2\\ -\alpha p_2+(m+L)\lambda_1+(L-\gamma)\lambda_2 & p_1+\alpha^2 p_2-2\lambda_2-2\lambda_1 \emtx \le 0
\end{align}
Then SAGA  \eqref{eq:SAGAx} \eqref{eq:SAGy} initialized with any $x^0\in \R^p$ and $y_i^0\in \R^p$ satisfies
\begin{align}
\mathbb{E}\left[\| x^k-x^*\|^2+\frac{p_1}{p_2}\sum_{i=1}^n\|y_i^k-\nabla f_i(x^*)\|^2\right] \le \rho^{2k} R^0
\end{align}
where $R^0=\|x^0-x^*\|^2+\frac{p_1}{p_2}\sum_{i=1}^n\|y_i^0-\nabla f_i(x^*)\|^2$.
\item (Finito):  Suppose $g\in \mathcal{S}(m, L)$, and $\gamma$ is defined by \eqref{eq:gammav} based on assumptions on $f_i$. If there exist scalars $p_1$, $p_2$, $p_3$, $p_4$, $p_5$ and non-negative scalars $\lambda_1$, $\lambda_2$ such that $p_1>0$, $p_4>0$ and 
\begin{align}
&\bmtx p_1+np_2 & np_3 \\ np_3 & p_4+np_4\emtx>0\\
\label{eq:FinitoLMI10}
&\bmtx  p_2-p_1+n(1-\rho^2)p_1 & p_3 & -p_2\\ p_3 & p_5-p_4+n(1-\rho^2)p_4 & -p_3\\ -p_2 & -p_3 & p_1+p_2-2\lambda_2\emtx \le 0\\
\label{eq:FinitoLMI20}
&\bmtx X_{11} & X_{12} & X_{13}\\ X_{12} & (p_4+np_5)(1-\rho^2)-\frac{2Lm\lambda_1-2L\gamma \lambda_2}{n} &p_3+\frac{(L+m)\lambda_1+(L-\gamma)\lambda_2}{n} \\ X_{13} & p_3+\frac{(L+m)\lambda_1+(L-\gamma)\lambda_2}{n} & \frac{p_1+p_2-2\lambda_1-2\lambda_2}{n} \emtx\le 0\\
\begin{split}
&X_{11}= (1-\frac{1}{n}-\rho^2)p_1+\frac{p_2}{n}-n\rho^2 p_2+(n-2)p_2-2(1-\frac{1}{n})p_3\alpha n\\
&\,\,\,\,\,\,\,\,\,\,\,\,\,\,\,\,\,\,\,\,\,+(p_4+p_5-2Lm\lambda_1+2L\gamma \lambda_2)\alpha^2 n\\
\end{split}\\
&X_{12}= (1-\rho^2)p_3 n-p_3-(p_4+np_5-2Lm\lambda_1+2L\gamma\lambda_2)\alpha \\
&X_{13}=(1-\frac{1}{n})p_2-(p_3+\lambda_1(L+m)+\lambda_2(L-\gamma))\alpha
\end{align}
Then Finito \eqref{eq:Finitox} \eqref{eq:Finitoy} with any initial condition $x_i^0\in \R^p$ and $y_i^0\in \R^p$ satisfies
\begin{align}
\mathbb{E} V^k \le \rho^{2k}V^0 
\end{align}
where $V^k=(\xi^k-\xi^*)^T P (\xi^k-\xi^*)$, $\xi^k=\bmtx y^k \\ x^k \emtx$, $P=\bmtx p_1 I_n+p_2 ee^T & p_3 ee^T \\ p_3 ee^T & p_4 I_n+p_5 ee^T\emtx\otimes I_p$. 
\item (SDCA): Suppose $\frac{1}{n}\sum_{i=1}^n f_i \in \mathcal{F}(0, L)$. Set $\gamma=0$ if $f_i\in \mathcal{F}(0,L)$, and set $\gamma=L$ if $f_i$ is only $L$-smooth. Denote $\tilde{\alpha}=\alpha mn$. If there exist real scalars  $p_1$, $p_2$ and nonnegative $\lambda_1$, $\lambda_2$ such that $p_1>0$, $p_1+np_2>0$, and 
\begin{align}
\label{eq:SDCALMI1}
&\bmtx  p_1(\tilde{\alpha}^2-2\tilde{\alpha}+n(1-\rho^2))+p_2\tilde{\alpha}^2& p_1(\tilde{\alpha}^2-\tilde{\alpha})+\tilde{\alpha}^2 p_2\\  p_1(\tilde{\alpha}^2-\tilde{\alpha})+\tilde{\alpha}^2 p_2\ &  (p_1+p_2)\tilde{\alpha}^2-2\lambda_2\emtx \le 0\\
\label{eq:SDCALMI2}
&\bmtx X_{11} & X_{12} \\[2mm]  X_{12} &  (p_1+p_2)\tilde{\alpha}^2-2(\lambda_1+\lambda_2)\emtx \le 0\\
&X_{11}=p_1(\tilde{\alpha}^2-2\tilde{\alpha}+n(1-\rho^2))+p_2(\tilde{\alpha}-n)^2-n^2\rho^2p_2 +\frac{2\gamma L \lambda_2}{m^2}\\
&X_{12}=p_1(\tilde{\alpha}^2-\tilde{\alpha})+\tilde{\alpha}(\tilde{\alpha}-n) p_2+\frac{\lambda_1 L +(L-\gamma)\lambda_2}{m}
\end{align}
Then SDCA \eqref{eq:SDCAx} \eqref{eq:SDCAy} with stepsize $\alpha$ and initial condition $y_0^k$ satisfies
\begin{align}
\mathbb{E}\left[\| x^k-x^*\|^2+\frac{p_1}{p_2 m^2n^2}\sum_{i=1}^n\|y_i^k+\nabla f_i(x^*)\|^2\right] \le \rho^{2k}R^0
\end{align}
where $R^0=\|x^0-x^*\|^2+\frac{p_1}{p_2m^2 n^2}\sum_{i=1}^n\|y_i^0+\nabla f_i(x^*)\|^2$.
\end{enumerate}
\end{theorem}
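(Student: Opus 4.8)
The plan is to derive each of the three reduced LMI systems directly from the general condition~\eqref{eq:lmiIQCS0} of Theorem~\ref{thm:lmiS0}, by first substituting the explicit state matrices from Table~\ref{tab:JumpModel} together with the structured parameterization of $\tilde P$ from Table~\ref{tab:NaiveA}, and then performing an exact dimension reduction that exploits the permutation symmetry of the problem. Fixing a method, say SAGA, I would compute the averaged blocks $\frac1n\sum_i \tilde A_i^T\tilde P\tilde A_i$, $\frac1n\sum_i \tilde A_i^T\tilde P\tilde B_i$, and $\frac1n\sum_i\tilde B_i^T\tilde P\tilde B_i$ appearing in~\eqref{eq:lmiIQCS0}, using the elementary identities $\sum_i e_ie_i^T = I_n$, $\sum_i e_i = e$, and the fact that $I_n-e_ie_i^T$ is a projection. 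The same identities control the quadratic-constraint contribution built from $\tilde D_{\psi 1}$ and $\tilde D_{\psi 2}$. The outcome of this step is that, thanks to the identical assumption placed on every $f_i$ and the uniform sampling of $i_k$, each block of the resulting symmetric matrix is forced into $\mathrm{span}\{I_n, ee^T\}$, the commutant of the permutation action on $\R^n$; that is, every block has the form $\mu I_n + q\,ee^T$, with scalar blocks and vector blocks $q\,e$ appearing wherever the one-dimensional $x$-coordinate of $\tilde\xi$ is involved. This is exactly the structure recorded in Table~\ref{tab:NaiveA}.

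The core of the argument is then a single linear-algebra lemma. Writing $P_e=\frac1n ee^T$ for the orthogonal projection onto $\mathrm{span}(e)$ and $P_e^\perp=I_n-P_e$, any block $\mu I_n + q\,ee^T$ decomposes as $(\mu+nq)P_e+\mu P_e^\perp$, so a block matrix whose $(a,b)$ block equals $\mu_{ab}I_n+q_{ab}ee^T$ can be written as $(\Lambda+nQ)\otimes P_e + \Lambda\otimes P_e^\perp$ for the small matrices $\Lambda=[\mu_{ab}]$ and $Q=[q_{ab}]$. Because $P_e$ and $P_e^\perp$ project onto orthogonal complementary subspaces, the matrix is orthogonally block-diagonal, and it is negative semidefinite if and only if $\Lambda\le 0$ (the $e^\perp$-mode, of multiplicity $n-1$) and $\Lambda+nQ\le 0$ (the $e$-mode) both hold. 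Applying the identical splitting to the constraint $\tilde P>0$ produces the stated positivity requirements (e.g. $p_1>0$ and $p_1+np_2>0$ for SDCA, and $p_1>0$, $p_4>0$ together with the $e$-mode $2\times2$ positivity for Finito). I would state and prove this lemma once and then invoke it for all three methods; it is what turns an $n$-dependent LMI into conditions whose size is at most $3\times3$.

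With the lemma in hand, each method is a matter of reading off the two reduced matrices. For SDCA and Finito the stacked vector $(\tilde\xi,w)$ consists of two, respectively three, genuine $n$-blocks, so each mode yields a $2\times2$ LMI (giving~\eqref{eq:SDCALMI1}, \eqref{eq:SDCALMI2}) or a $3\times3$ LMI (giving~\eqref{eq:FinitoLMI10}, \eqref{eq:FinitoLMI20}); the off-diagonal $p_3\,ee^T$ in the Finito parameterization is essential and reflects the numerically-observed need for a non-diagonal Lyapunov matrix. For SAGA one extra simplification must be verified: the $x$-coordinate is one-dimensional and contributes only to the $e$-mode, and a direct computation shows that the $y$--$x$ and $y$--$w$ couplings vanish in the $e$-mode (the averaged off-diagonal $\frac1n\sum_i(e-ne_i)=0$, and $\mu_6+nq_6=0$ for the $y$--$w$ block). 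Hence the nominal $3\times3$ $e$-mode matrix block-diagonalizes into a scalar $y$-condition $p_1(\frac{n-1}{n}-\rho^2)\le0$ plus a $2\times2$ LMI in $(x,w)$, which is~\eqref{eq:SAGALMI2}; the leftover scalar condition is subsumed by the $(1,1)$ entry of the $e^\perp$-mode LMI~\eqref{eq:SAGALMI1}, so only the two $2\times2$ conditions remain. Note also that the averaged-$g$ constraint $\lambda_1$ acts purely on the $e$-mode while the individual $\lambda_2$ constraints populate both modes, which is why $\lambda_1$ appears only in the $e$-mode LMIs. Finally, feeding a feasible $\tilde P$ back into the conclusion of Theorem~\ref{thm:lmiS0} gives $\mathbb E V(\xi^{k+1})\le\rho^2\,\mathbb E V(\xi^k)$ with $V(\xi)=(\xi-\xi^*)^T(\tilde P\otimes I_p)(\xi-\xi^*)$; expanding this quadratic form under each parameterization and dividing out the common positive factor yields the explicit weighted-norm inequalities stated for SAGA, Finito, and SDCA.

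The main obstacle is not conceptual but the sheer volume and exactness of the block computations: one must evaluate every averaged product and every quadratic-constraint contribution and verify that it lands precisely in $\mathrm{span}\{I_n,ee^T\}$ with the correct scalar coefficients, separately for each of the three parameterizations of $\tilde P$. The genuinely non-routine points are the SAGA decoupling described above (recognizing that the $y$-mode separates in the $e$-direction and that the residual scalar condition is absorbed into~\eqref{eq:SAGALMI1}) and, for Finito, tracking the cross terms $p_2,p_3,p_5$ so that both reduced $3\times3$ matrices come out exactly as~\eqref{eq:FinitoLMI10} and~\eqref{eq:FinitoLMI20}. Once these bookkeeping hurdles are cleared, the reduction lemma does the rest automatically.
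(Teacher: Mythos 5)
Your proposal is correct and follows essentially the same route as the paper: the projection decomposition $\mu I_n+q\,ee^T=(\mu+nq)P_e+\mu P_e^\perp$ is exactly the content of the paper's Lemma~\ref{lem:KeyLA} (proved there by conjugating with an orthogonal matrix $\bmtx e/\sqrt{n} & Q\emtx$), and the rest is the same block-by-block substitution of $(\tilde A_i,\tilde B_i,\tilde C)$ and the structured $\tilde P$ into \eqref{eq:lmiIQCS0}. Your treatment of the SAGA scalar $x$-coordinate—observing that $q_4=0$ and $\mu_6+nq_6=0$ so the $e$-mode decouples and the residual scalar condition is absorbed into \eqref{eq:SAGALMI1}—matches what the paper packages into Statement~3 of that lemma via a Schur complement.
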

\begin{proof}
One can compute analytical expressions of the matrix on the left side of \eqref{eq:lmiIQCS0} and prove this theorem using the linear algebra tricks mentioned before. Detailed proofs are left to Appendix \ref{sec:ProofDR}.
\end{proof}

\subsection{New Analytical Rate Bounds for SAGA, Finito, and SDCA}
We can analytically solve the LMIs in Theorem~\ref{thm:LMISector}, and prove the following rate results for SAGA, Finito and SDCA.
\begin{corollary}(Rate Bounds for SAGA)
\label{cor:SAGA}
Assume $i_k$ is uniformly sampled from $\mathcal{N}$, and $g\in \mathcal{S}(m,L)$ with $m>0$.
Consider SAGA \eqref{eq:SAGAx} \eqref{eq:SAGy} initialized from $x^0\in \R^p$ and $y_i^0\in \R^p$.  
\begin{enumerate}
\item If $f_i\in \mathcal{F}(m,L)$, then for any $0<\alpha\le \frac{1}{2L}$, one has
\begin{align}
\label{eq:SAGArateSC1}
\mathbb{E}\left[\| x^k-x^*\|^2\right] \le \left(1-\min{\left\{\frac{2L\alpha-1}{(L\alpha-1)n}, 2m\alpha-\frac{\alpha m^2}{(1-L\alpha)L}\right\}}\right)^k R^0
\end{align}
where $R^0=\|x^0-x^*\|^2+\frac{\alpha}{L}\sum_{i=1}^n\|y_i^0-\nabla f_i(x^*)\|^2$.
The following bound also holds for any $\alpha\le \frac{4}{9L}$
\begin{align}
\label{eq:SAGArateSC2}
\mathbb{E}\left[\| x^k-x^*\|^2\right] \le 
\left(1-\min{\left\{\frac{9L\alpha-4}{(3L\alpha-4)n}, 2m\alpha-\frac{3\alpha m^2}{(4-3L\alpha)L}\right\}}\right)^k R^0
\end{align}
where $R^0=\|x^0-x^*\|^2+\frac{2\alpha}{3L}\sum_{i=1}^n\|y_i^0-\nabla f_i(x^*)\|^2$.
\item If $f_i \in \mathcal{F}(0,L)$, then for any $0< \alpha \le \frac{1}{2L}$, one has
\begin{align}
\label{eq:SAGArateSC3}
\mathbb{E}\left[\| x^k-x^*\|^2\right] \le 
\left(1-\min{\left\{\frac{2L\alpha-b}{(L\alpha-b)n}, 2(1-b)m\alpha-\frac{\alpha m^2(1-b)^2}{(2-b-L\alpha)L}\right\}}\right)^k R^0
\end{align}
where $b$ can be any scalar in $[2L\alpha, 1]$, and $R^0=\|x^0-x^*\|^2+\frac{b\alpha}{L}\sum_{i=1}^n\|y_i^0-\nabla f_i(x^*)\|^2$. More specifically, when $\alpha=\frac{1}{3L}$, we can set $b=\frac{5}{6}$ and get the following bound:
\begin{align}
\label{eq:SAGArateSC4}
\mathbb{E}\left[\| x^k-x^*\|^2\right] \le 
\left(1-\min{\left\{\frac{1}{3n}, \frac{m}{10L}\right\}}\right)^k R^0
\end{align}
where $R^0=\|x^0-x^*\|^2+\frac{5}{18L^2}\sum_{i=1}^n\|y_i^0-\nabla f_i(x^*)\|^2$.
\item If $f_i$  is only assumed to be $L$-smooth, then the following bound holds for any $\alpha \le \frac{3m}{8L^2}$, 
\begin{align}
\label{eq:SAGArateSC5}
\mathbb{E}\left[\| x^k-x^*\|^2\right] \le 
\left(1-\min{\left\{\frac{b-2}{(b-1)n},  \frac{3m\alpha}{2}-2bL^2\alpha^2\right\}}\right)^k R^0
\end{align}
where $b$ can be any  scalar satisfying $2\le b \le \frac{3m}{4\alpha L^2}$, and $R^0=\|x^0-x^*\|^2+b\alpha^2\sum_{i=1}^n\|y_i^0-\nabla f_i(x^*)\|^2$. Specifically, when $\alpha=\frac{m}{8L^2}$, we can set $b=3$ and get the following bound:
\begin{align}
\label{eq:SAGArateSC6}
\mathbb{E}\left[\| x^k-x^*\|^2\right] \le 
\left(1-\min{\left\{\frac{1}{2n}, \frac{3m^2}{32L^2}\right\}}\right)^k R^0
\end{align}
where $R^0=\|x^0-x^*\|^2+\frac{3m^2}{64L^4}\sum_{i=1}^n\|y_i^0-\nabla f_i(x^*)\|^2$.
When $\alpha=\frac{m}{4(m^2n + L^2)}$, we can set $b=\frac{2(m^2n +L^2)}{L^2}$ and obtain
\begin{align}
\label{eq:SAGArateSC7}
\mathbb{E}\left[\| x^k-x^*\|^2\right] \le 
\left(1-\frac{m^2}{8(m^2n+L^2)}\right)^k R^0
\end{align}
where $R^0=\|x^0-x^*\|^2+\frac{m^2}{8(m^2n+L^2)L^2}\sum_{i=1}^n\|y_i^0-\nabla f_i(x^*)\|^2$. Hence, the $\epsilon$-optimal iteration complexity of SAGA without individual convexity is $\tilde{\mathcal{O}}\left((\frac{L^2}{m^2}+n)\log(\frac{1}{\epsilon})\right)$.
\end{enumerate}
\end{corollary}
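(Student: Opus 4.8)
The plan is to obtain Corollary~\ref{cor:SAGA} as a direct specialization of the SAGA part of Theorem~\ref{thm:LMISector}. It suffices, for each of the three hypotheses on $f_i$, to exhibit positive scalars $p_1,p_2$ and nonnegative multipliers $\lambda_1,\lambda_2$ satisfying the two $2\times2$ inequalities \eqref{eq:SAGALMI1} and \eqref{eq:SAGALMI2} at the claimed rate $\rho$. Since $g\in\mathcal{S}(m,L)$ we take $\nu=-m$ (already substituted into those reduced LMIs) and read $\gamma$ off \eqref{eq:gammav}: $\gamma=-m$, $\gamma=0$, or $\gamma=L$ in the three cases. The convergence conclusion of Theorem~\ref{thm:LMISector} bounds $\mathbb{E}\|x^k-x^*\|^2$ together with the $y$-term weighted by $p_1/p_2$, and $\mathbb{E}\|x^k-x^*\|^2$ is dominated by that Lyapunov value; hence $p_1/p_2$ is exactly the initialization coefficient appearing in each $R^0$ (e.g. $\alpha/L$ for \eqref{eq:SAGArateSC1}, $b\alpha/L$ for \eqref{eq:SAGArateSC3}, $b\alpha^2$ for \eqref{eq:SAGArateSC5}). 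I would therefore normalize $p_2=1$ and treat $p_1$ (equivalently the ratio, or the free parameter $b$) as the main design variable.

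Next I would replace each matrix inequality by the scalar criterion for a symmetric $2\times2$ matrix to be negative semidefinite: both diagonal entries nonpositive and the determinant nonnegative. The key structural observation is that the two LMIs control, respectively, the two arguments of the final $\min\{\cdot,\cdot\}$ rate. Indeed \eqref{eq:SAGALMI1} is the only one carrying the factor $n$, so it produces the $1/n$-type term (the $\tfrac{2L\alpha-1}{(L\alpha-1)n}$-style argument) and involves only $\lambda_2$; while \eqref{eq:SAGALMI2} carries the $m$-dependence and produces the $2m\alpha-\cdots$ argument. In the individually strongly convex case ($\gamma=-m$) there is a further simplification: \eqref{eq:SAGALMI2} depends on the multipliers only through $s:=\lambda_1+\lambda_2$, so I would use its scalar conditions (in $s$ and $\rho^2$) to extract the largest admissible $1-\rho^2$ from \eqref{eq:SAGALMI2}, then use \eqref{eq:SAGALMI1} (in $\lambda_2$ and $\rho^2$) to extract the largest admissible $1-\rho^2$ from \eqref{eq:SAGALMI1}, and finally check consistency $0\le\lambda_2\le s$ so that $\lambda_1=s-\lambda_2\ge0$. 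Because both inequalities must hold simultaneously, one needs $\rho^2$ above both lower bounds, i.e. $1-\rho^2\le\min\{\cdot,\cdot\}$, which is achieved with equality by setting $\rho^2=1-\min\{\cdot,\cdot\}$ --- exactly the published form.

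With this template the cases differ only in execution. For $\gamma=-m$, fixing $p_1/p_2=\alpha/L$ yields \eqref{eq:SAGArateSC1} on $\alpha\le\tfrac{1}{2L}$ and fixing $p_1/p_2=\tfrac{2\alpha}{3L}$ yields \eqref{eq:SAGArateSC2} on $\alpha\le\tfrac{4}{9L}$; these stepsize restrictions are precisely what keep the diagonal entries of the correct sign and the multipliers nonnegative. In the merely convex and merely smooth cases the multipliers no longer collapse to a single combination, so I would carry $\lambda_1,\lambda_2$ and the ratio $b=p_1/p_2$ as free parameters, tuning $b$ to balance the two arguments of the minimum; the admissible range of $b$ (e.g. $[2L\alpha,1]$ in the convex case, $2\le b\le\tfrac{3m}{4\alpha L^2}$ in the smooth case) is dictated by the sign and nonnegativity requirements. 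The specific instantiations ($\alpha=\tfrac1{3L},b=\tfrac56$ giving \eqref{eq:SAGArateSC4}; $\alpha=\tfrac{m}{8L^2},b=3$ giving \eqref{eq:SAGArateSC6}; and the complexity-optimal $\alpha=\tfrac{m}{4(m^2n+L^2)}$ giving \eqref{eq:SAGArateSC7}) are then obtained by substituting into the closed-form rate, from which the $\tilde{\mathcal{O}}\!\left((\tfrac{L^2}{m^2}+n)\log\tfrac1\epsilon\right)$ complexity follows.

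I expect the main obstacle to be the determinant conditions, which are quadratic (hence nonlinearly coupled) in $p_1,\lambda_1,\lambda_2$ and $\rho^2$: the art is to guess the correct closed-form multiplier assignment --- guided, as the paper advocates, by numerically solving the small LMIs --- so that both determinants factor cleanly and the two bounds on $1-\rho^2$ take the published rational forms. A secondary bookkeeping difficulty is verifying, for each admissible range of $\alpha$ and $b$, that the chosen $\lambda_1,\lambda_2$ are genuinely nonnegative, that the consistency $0\le\lambda_2\le s$ (or its analogue) holds, and that the non-binding LMI is satisfied with slack once $\rho^2$ is pinned by the binding one.
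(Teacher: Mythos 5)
Your proposal follows essentially the same route as the paper: specialize the SAGA LMIs \eqref{eq:SAGALMI1}--\eqref{eq:SAGALMI2} of Theorem~\ref{thm:LMISector}, reduce each $2\times2$ inequality to scalar conditions (the paper uses Schur complements, equivalent to your diagonal-plus-determinant criterion), observe that \eqref{eq:SAGALMI1} governs the $1/n$ branch via $\lambda_2$ while \eqref{eq:SAGALMI2} governs the $m$-dependent branch (collapsing to $\lambda_1+\lambda_2$ when $\gamma=-m$), and exhibit explicit feasible $(p_1,p_2,\lambda_1,\lambda_2)$ at the claimed rate. The only remaining work is the concrete multiplier assignments the paper supplies (e.g. $p_2=1/\alpha$, $\lambda_2=1/L$ or $b/L$ or $b\alpha$ depending on the case), which your normalization $p_2=1$ reaches up to the homogeneity of the LMI.
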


\begin{corollary}(Rate Bounds for Finito)
\label{cor:Finito}
Assume $i_k$ is uniformly sampled from $\mathcal{N}$, and $g\in \mathcal{S}(m,L)$ with $m>0$.
Consider Finito \eqref{eq:Finitox} \eqref{eq:Finitoy}  initialized from $x_i^0\in \R^p$ and $y_i^0\in \R^p$.  Define  $v^k= \frac{1}{n}\sum_{i=1}^n x_i^k-\alpha\sum_{i=1}^n  y_i^k$.
\begin{enumerate}
\item If $f_i\in \mathcal{F}(m,L)$ and $n\ge \sqrt{\frac{50L}{m}}$, then Finito with $\alpha=\frac{1}{5L}$ satisfies
\begin{align}
\label{eq:FinitorateSC1}
\mathbb{E}\left[\frac{m}{10L}\sum_{i=1}^n\| x_i^k-x^*\|^2+\|v^k-x^*\|^2\right] \le \left(1-\min{\left\{\frac{1}{2n}, \frac{m}{20L}\right\}}\right)^k R^0
\end{align}
where $R^0=\frac{m}{10L}\sum_{i=1}^n\| x_i^0-x^*\|^2+\frac{1}{5L^2}\sum_{i=1}^n\|y_i^0-\nabla f_i(x^*)\|^2+\|v^0-x^*\|^2$.
\item If $f_i \in \mathcal{F}(0,L)$ and $n\ge \sqrt{\frac{64L}{m}}$, then Finito with $\alpha=\frac{1}{8L}$ satisfies 
\begin{align}
\label{eq:FinitorateSC2}
\mathbb{E}\left[\frac{m}{16L}\sum_{i=1}^n\| x_i^k-x^*\|^2+\|v^k-x^*\|^2\right] \le \left(1-\min{\left\{\frac{1}{3n}, \frac{5m}{176L}\right\}}\right)^k R^0
\end{align}
where $R^0=\frac{m}{16L}\sum_{i=1}^n\| x_i^0-x^*\|^2+\frac{1}{16L^2}\sum_{i=1}^n\|y_i^0-\nabla f_i(x^*)\|^2+\|v^0-x^*\|^2$.

\item If $f_i$  is $L$-smooth and $n\ge \frac{48L^2}{m^2}$, then Finito with $\alpha=\frac{1}{2nm}$ satisfies
\begin{align}
\label{eq:FinitorateSC3}
\mathbb{E}\left[\frac{3}{8n}\sum_{i=1}^n\| x_i^k-x^*\|^2+\|v^k-x^*\|^2\right] \le \left(1-\frac{1}{3n}\right)^k R^0
\end{align}
where $R^0=\frac{3}{8n}\sum_{i=1}^n\| x_i^0-x^*\|^2+\frac{1}{n^2m^2}\sum_{i=1}^n\|y_i^0-\nabla f_i(x^*)\|^2+\|v^0-x^*\|^2$.
\end{enumerate}
\end{corollary}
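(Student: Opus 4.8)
The plan is to reduce each of the three cases to constructing an explicit feasible point for the two $3\times 3$ LMIs \eqref{eq:FinitoLMI10}--\eqref{eq:FinitoLMI20} furnished by Theorem~\ref{thm:LMISector}(Finito), and then to translate the resulting bound $\mathbb{E}V^k \le \rho^{2k}V^0$ into the claimed inequality. I would begin by fixing $\gamma$ according to the hypothesis on $f_i$ (i.e. $\gamma = -m$, $\gamma = 0$, or $\gamma = L$), substituting the prescribed stepsize ($\alpha = \frac{1}{5L}$, $\frac{1}{8L}$, or $\frac{1}{2nm}$), and positing a candidate rate $\rho$ with $1-\rho^2$ equal to the $\min\{\cdot,\cdot\}$ appearing in the corollary. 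Feasibility of both LMIs at this $\rho$ then yields the stated bound.

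The observation that pins down the Lyapunov parameters is that the target Lyapunov function is already visible in the displayed $R^0$ expressions. Writing $\bar x := \sum_{i=1}^n (x_i^k - x^*)$ and $\bar y := \sum_{i=1}^n (y_i^k - \nabla f_i(x^*))$, the stationarity relation $\nabla g(x^*)=0$ gives $v^k - x^* = \frac1n \bar x - \alpha \bar y$, so
$$\|v^k - x^*\|^2 = \tfrac{1}{n^2}\|\bar x\|^2 - \tfrac{2\alpha}{n}\,\bar x^T\bar y + \alpha^2\|\bar y\|^2.$$
Expanding $V^k = (\xi^k-\xi^*)^T P(\xi^k-\xi^*)$ with $P$ as in Theorem~\ref{thm:LMISector}, the $I_n$ blocks contribute $p_1\sum\|y_i^k-\nabla f_i(x^*)\|^2 + p_4\sum\|x_i^k-x^*\|^2$ while the $ee^T$ blocks contribute $p_5\|\bar x\|^2 + 2p_3\,\bar x^T\bar y + p_2\|\bar y\|^2$. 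Matching the $ee^T$ part to $\|v^k-x^*\|^2$ forces $p_5 = \frac{1}{n^2}$, $p_3 = -\frac{\alpha}{n}$, $p_2 = \alpha^2$, and reading the $I_n$ coefficients off the $R^0$ formulas fixes $p_1$ and $p_4$ (e.g. $p_1 = \frac{1}{5L^2}$, $p_4 = \frac{m}{10L}$ in case~1). This leaves only $\lambda_1,\lambda_2$ genuinely free.

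With the $p_j$ determined, I would substitute everything into the two LMIs and verify negative semidefiniteness of the two $3\times 3$ matrices by Sylvester's criterion applied to their negatives, i.e. checking that the diagonal entries, the $2\times 2$ leading minors, and the full determinants all have the correct signs, choosing $\lambda_1,\lambda_2$ along the way to clear the coupling entries $X_{12},X_{13}$ and the off-diagonals of \eqref{eq:FinitoLMI10}. The positive-definiteness precondition on $P$ reduces, via the same $I_n+ee^T$ block diagonalization used in proving Theorem~\ref{thm:LMISector}, to the scalar condition stated there, which is immediate for the chosen $p_j$.

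The main obstacle is the verification of \eqref{eq:FinitoLMI20}: unlike SAGA, Finito genuinely requires off-diagonal Lyapunov terms, so both LMIs are honestly $3\times 3$ and the polynomial entries $X_{11},X_{12},X_{13}$ couple all the parameters together with $n$. I expect the thresholds $n\ge\sqrt{50L/m}$, $n\ge\sqrt{64L/m}$, and $n\ge 48L^2/m^2$ to emerge precisely as the conditions making the relevant $2\times 2$ or full $3\times 3$ determinant nonnegative; isolating the correct $\lambda_1,\lambda_2$ and certifying these determinant inequalities under the stated $n$-thresholds is the computational heart of the argument. Once both LMIs hold at the claimed $\rho$, Theorem~\ref{thm:LMISector} gives $\mathbb{E}V^k\le\rho^{2k}V^0$; since $V^k$ equals $p_1\sum\|y_i^k-\nabla f_i(x^*)\|^2 + p_4\sum\|x_i^k-x^*\|^2 + \|v^k-x^*\|^2$ by construction and the displayed left-hand side simply drops the nonnegative $y$-term, the corollary follows.
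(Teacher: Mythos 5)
Your setup is exactly the paper's: you correctly deduce $p_2=\alpha^2$, $p_3=-\frac{\alpha}{n}$, $p_5=\frac{1}{n^2}$ from matching the $ee^T$-blocks of $P$ to $\|v^k-x^*\|^2$, read $p_1,p_4$ off the displayed $R^0$ (e.g.\ $p_1=\frac{1}{5L^2}$, $p_4=\frac{m}{10L}$ in case~1), and reduce everything to feasibility of \eqref{eq:FinitoLMI10}--\eqref{eq:FinitoLMI20} at the claimed $\rho$. Where you diverge is the verification. The paper does not attack the two $3\times 3$ LMIs head-on with Sylvester's criterion. It first takes a Schur complement with respect to the $(3,3)$-entry of each LMI, reducing to $2\times 2$ conditions, and then applies a rank-one padding lemma (Lemma \ref{lem:lineartrick1}, based on $\bsmtx \alpha n & \pm 1\\ \pm 1 & \frac{1}{\alpha n}\esmtx \ge 0$) to absorb the off-diagonal entries into the diagonals. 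This deliberately conservative relaxation (Corollary \ref{cor:Finito2}) leaves only four \emph{scalar} inequalities in $p_1,p_4,\lambda_1,\lambda_2$, and it is the scalar condition carrying the factor $n^3 p_4$ --- not a $3\times 3$ determinant --- that produces the big-data thresholds $n\ge\sqrt{50L/m}$, $n\ge\sqrt{64L/m}$, $n\ge 48L^2/m^2$. Your direct route would in principle succeed (the LMIs are feasible at the stated parameters, so the sign checks on minors must come out right), but the determinants are polynomials in $n,\alpha,m,L,\lambda_1,\lambda_2$ of considerable complexity, and this is precisely the computation the paper's relaxation is designed to avoid.

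One concrete sub-step of your plan would fail as stated: you propose ``choosing $\lambda_1,\lambda_2$ along the way to clear the coupling entries $X_{12},X_{13}$ and the off-diagonals of \eqref{eq:FinitoLMI10}.'' With $p_2,p_3,p_5$ already pinned down, the off-diagonal entries of \eqref{eq:FinitoLMI10} (namely $p_3$ and $-p_2$, $-p_3$) do not depend on $\lambda_1,\lambda_2$ at all and are nonzero, and you cannot simultaneously zero both $X_{12}$ and $X_{13}$ with two parameters that also have to satisfy the sign and diagonal constraints. The paper never zeroes these couplings; it bounds them via the rank-one trick (which requires tracking the sign of the off-diagonal term, hence the two cases in Lemma \ref{lem:lineartrick1}) and then verifies, for each of the three assumptions on $f_i$, an explicit choice such as $\lambda_1=0$, $\lambda_2=\frac{\alpha}{L}$ (case 1) against the three scalar inequalities \eqref{eq:FinitoY11n}--\eqref{eq:FinitoZ12n}. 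So the architecture of your argument is right and the parameter identification is correct, but you should replace the ``clear the off-diagonals'' step with either the paper's decoupling lemma or a full (and much heavier) minor computation.
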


\begin{corollary}(Rate Bounds for SDCA without Duality)
\label{cor:SDCA}
Assume $i_k$ is uniformly sampled from $\mathcal{N}$, and $\sum_{i=1}^n f_i \in \mathcal{F}(0,L)$.
Consider SDCA \eqref{eq:SDCAx} \eqref{eq:SDCAy} initialized from $y_i^0$.  
\begin{enumerate}
\item If $f_i\in \mathcal{F}(0,L)$, then for any $0<\alpha\le \frac{2}{L+2mn}$, one has
\begin{align}
\label{eq:SDCArateSC1}
\mathbb{E}\left[\| x^k-x^*\|^2+\frac{\alpha }{(1-\alpha m n)mn}\sum_{i=1}^n \|y_i^k+\nabla f_i(x^*)\|^2\right] \le \left(1-m\alpha \right)^k R^0
\end{align}
where $R^0=\|x^0-x^*\|^2+\frac{\alpha}{(1-\alpha m n) m n}\sum_{i=1}^n\|y_i^0+\nabla f_i(x^*)\|^2$. 

\item If $f_i$ is $L$-smooth, then  \eqref{eq:SDCArateSC1} holds for any $0< \alpha \le \frac{m}{L^2+m^2 n}$.
When $\alpha=\frac{m}{(m^2n + L^2)}$, the following bound holds
\begin{align}
\label{eq:SDCArateSC3}
\mathbb{E}\left[\| x^k-x^*\|^2+\frac{1 }{L^2 n}\sum_{i=1}^n \|y_i^k+\nabla f_i(x^*)\|^2\right] \le  
\left(1-\frac{m^2}{m^2n+L^2}\right)^k R^0
\end{align}
where $R^0=\|x^0-x^*\|^2+\frac{1}{L^2 n}\sum_{i=1}^n\|y_i^0+\nabla f_i(x^*)\|^2$. 
\end{enumerate}
\end{corollary}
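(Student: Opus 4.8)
The plan is to derive Corollary~\ref{cor:SDCA} entirely from the small LMI conditions \eqref{eq:SDCALMI1}--\eqref{eq:SDCALMI2} of Theorem~\ref{thm:LMISector}(3), which already package the convergence guarantee; all that remains is to exhibit, for each assumption on $f_i$, explicit feasible values of the free parameters $(\rho, p_1, p_2, \lambda_1, \lambda_2)$. For both cases I would fix $\rho^2 = 1-m\alpha$ to match the claimed rate, and I would pin the ratio $p_1/p_2$ by matching Lyapunov functions: Theorem~\ref{thm:LMISector}(3) certifies decay of $\|x^k-x^*\|^2 + \frac{p_1}{p_2 m^2 n^2}\sum_i \|y_i^k+\nabla f_i(x^*)\|^2$, while the corollary wants the weight $\frac{\alpha}{(1-\alpha mn)mn}$, so I would set $p_1/p_2 = \tilde{\alpha}/(1-\tilde{\alpha})$ with $\tilde{\alpha}=\alpha mn<1$, taking $p_2>0$ free (which makes $p_1>0$ and $p_1+np_2>0$ automatic).

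The key algebraic simplification I expect to exploit is that $\rho^2 = 1-m\alpha$ yields $n(1-\rho^2)=\tilde{\alpha}$, so the first entry $p_1(\tilde{\alpha}^2-2\tilde{\alpha}+n(1-\rho^2))$ collapses to $p_1(\tilde{\alpha}^2-\tilde{\alpha})$, and the chosen weight $p_1/p_2=\tilde{\alpha}/(1-\tilde{\alpha})$ forces the $(1,1)$ entry of \eqref{eq:SDCALMI1} to vanish identically. Since that entry equals the off-diagonal entry, \eqref{eq:SDCALMI1} collapses to $\bsmtx 0 & 0 \\ 0 & B \esmtx$ with $B=(p_1+p_2)\tilde{\alpha}^2-2\lambda_2$, so the first LMI holds once $\lambda_2\ge(p_1+p_2)\tilde{\alpha}^2/2$; I would take equality to zero out $B$. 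Feeding these choices into \eqref{eq:SDCALMI2} and using $p_1(\tilde{\alpha}^2-\tilde{\alpha})=-p_2\tilde{\alpha}^2$ should reduce $X_{11}$ to $-p_2\tilde{\alpha} n + 2\gamma L\lambda_2/m^2$ and $X_{12}$ to $-p_2\tilde{\alpha} n + (\lambda_1 L+(L-\gamma)\lambda_2)/m$, while the $(2,2)$ entry becomes $-2\lambda_1$ via $B=0$.

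With \eqref{eq:SDCALMI2} reduced to a $2\times2$ matrix whose diagonal entries are $\le 0$, negative semidefiniteness is just the determinant condition, and I would discharge it by choosing $\lambda_1$ to annihilate $X_{12}$, so that $\det = -2\lambda_1 X_{11}\ge 0$ holds for free. In the individually convex case $\gamma=0$ this forces $\lambda_1 = p_2\tilde{\alpha} n m/L - \lambda_2$, whose nonnegativity is equivalent to $\tilde{\alpha}\le 2nm/(L+2nm)$, that is exactly $\alpha\le 2/(L+2mn)$; here $X_{11}=-p_2\tilde{\alpha} n<0$ automatically. In the merely $L$-smooth case $\gamma=L$ the $(L-\gamma)\lambda_2$ term drops from $X_{12}$, giving $\lambda_1=p_2\tilde{\alpha} n m/L\ge 0$ for free, and the binding constraint shifts to $X_{11}=-p_2\tilde{\alpha} n + 2L^2\lambda_2/m^2\le 0$, which after substituting $\lambda_2$ becomes $\alpha\le m/(L^2+m^2n)$. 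Specializing $\alpha=m/(m^2n+L^2)$ then produces the explicit rate \eqref{eq:SDCArateSC3}.

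The main obstacle I anticipate is purely bookkeeping: confirming that the collapsed entries are exactly what I claimed requires careful substitution of $p_1/p_2$, $\lambda_1$, and $\lambda_2$ throughout the full expressions for $X_{11}$ and $X_{12}$, and one must verify that the stepsize endpoints coincide \emph{precisely} with the multiplier thresholds rather than leaving slack. I would guard against this by re-deriving each boundary value of $\alpha$ independently from the two distinct feasibility mechanisms ($\lambda_1\ge 0$ in the convex case versus $X_{11}\le 0$ in the $L$-smooth case), so that the stated ranges $\alpha\le 2/(L+2mn)$ and $\alpha\le m/(L^2+m^2n)$ emerge as tight consequences of LMI feasibility and not as artifacts of a suboptimal parameter guess.
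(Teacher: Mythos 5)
Your proposal is correct and follows essentially the same route as the paper: fix $\rho^2=1-m\alpha$ so that $n(1-\rho^2)=\tilde{\alpha}$, pin $p_1/p_2=\tilde{\alpha}/(1-\tilde{\alpha})$ to match the Lyapunov weight, and exhibit multipliers that make \eqref{eq:SDCALMI1}--\eqref{eq:SDCALMI2} collapse to diagonal negative semidefinite matrices, with the stepsize thresholds emerging from $\lambda_1\ge 0$ (convex case) and $X_{11}\le 0$ (smooth case) exactly as in the paper. The only deviation is cosmetic: in the individually convex case you split the total multiplier mass as $\lambda_2=\tfrac12$, $\lambda_1=\frac{(1-\tilde{\alpha})mn}{\tilde{\alpha}L}-\tfrac12$ whereas the paper takes $\lambda_1=0$, $\lambda_2=\frac{(1-\tilde{\alpha})mn}{\tilde{\alpha}L}$; since $\gamma=0$ only $\lambda_1+\lambda_2$ matters there, both choices are feasible and yield the identical bound.
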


All the proofs are presented in Section \ref{sec:MainProof}. All three corollaries are actually proved via analytically solving the LMI conditions in Theorem \ref{thm:LMISector}. When $f_i$ is assumed to be only smooth (not necessarily convex), we only need to modify the value of $\gamma$ to be $L$ and then analytically construct a feasible solution for the resultant LMIs. We believe our rate bounds for SAGA and Finito without individual convexity (Statement 3 in Corollary \ref{cor:SAGA} and Statement 3 in Corollary \ref{cor:Finito}) are new. 
Now we briefly discuss the connections between our results and some existing rate bounds.

\begin{enumerate}
\item (SAGA) Statement 1  in Corollary \ref{cor:SAGA} is new in the sense that it works for a range of $\alpha$ and  also highlights the trade-off between the dependence of $\rho^2$ on $n$ and $\frac{m}{L}$. Notice that  \eqref{eq:SAGArateSC1} works better under the big data condition  while \eqref{eq:SAGArateSC2} is less conservative with large condition number $L/m$.  Suppose $f_i\in \mathcal{F}(m,L)$. If one chooses $\alpha=\frac{1}{3L}$ in \eqref{eq:SAGArateSC2} and applies the fact $L\ge m$, \eqref{eq:SAGArateSC2} directly leads to
\begin{align}
\label{eq:SAGArateSC8}
\mathbb{E}\left[\| x^k-x^*\|^2\right] \le 
\left(1-\min{\left\{\frac{1}{3n}, \frac{m}{3L}\right\}}\right)^k \left(\|x^0-x^*\|^2+\frac{2}{9L^2}\sum_{i=1}^n\|y_i^0-\nabla f_i(x^*)\|^2\right)
\end{align}
The convergence rate in the above bound agrees with the result in \citet[Section 2]{defazio2014}. On the other hand, one can also choose $\alpha=\frac{1}{3L}$ in \eqref{eq:SAGArateSC1} and obtain
\begin{align}
\label{eq:SAGArateSC9}
\mathbb{E}\left[\| x^k-x^*\|^2\right] \le 
\left(1-\min{\left\{\frac{1}{2n}, \frac{m}{6L}\right\}}\right)^k \left(\|x^0-x^*\|^2+\frac{1}{3L^2}\sum_{i=1}^n\|y_i^0-\nabla f_i(x^*)\|^2\right)
\end{align}
Clearly, the above bound is better than \eqref{eq:SAGArateSC8} under the big data condition $n\ge \frac{3L}{m}$. 
In principle, one can generate a family of bounds to describe this trade-off in more details. But all these bounds will only affect the iteration complexity $\tilde{\mathcal{O}}\left((n+\frac{L}{m})\log(\frac{1}{\epsilon})\right)$ by a constant factor.

Actually, we can also recover some other existing rate bounds for SAGA with individual convexity by modifying the proofs.
See Remark \ref{rem:SAGA} for further discussions.

We notice that
for any fixed $m$ and $L$, SAGA (with $n$ sufficiently large) can achieve a rate $\rho^2=1-\frac{1}{cn}$ where $c$ is arbitrarily close to $1$.  
For example, consider $f_i\in \mathcal{F}(m,L)$.
 Given any  $c \in (1, \infty)$, we can choose a sufficiently small $\alpha$
to ensure $\frac{L\alpha-1}{2L\alpha-1}<c$. For this specific value of $\alpha$,
 \eqref{eq:SAGArateSC1} just leads to a rate bound $\rho^2=1-\frac{1}{cn}$ under the condition $\left( 2m\alpha-\frac{\alpha m^2}{(1-L\alpha)L}\right)n\ge  \frac{2L\alpha-1}{L\alpha-1}$. Similar arguments also work when $f_i\in \mathcal{F}(0,L)$ or $f_i$ being $L$-smooth. 

\item (Finito): 
When $f_i\in \mathcal{F}(m,L)$ with $m>0$, our result states a linear rate bound for $\alpha=\frac{1}{5L}$, which is a stepsize independent of the parameter $m$. This could be useful since sometimes $m$ is unknown for practical problems.  On the other hand, the rate proofs in \citet[Theorem 1]{defazio2014finito} work for $\alpha=\frac{1}{2nm}$ under the big data condition $n\ge \frac{2L}{m}$. 

In general, our rate bounds for Finito are not as good as the rate bounds for SAGA. This is due to the fact that the LMI conditions for Finito are more complicated and involve more decision variables. We are only able to analytically solve these LMIs under the big data condition, although our preliminary numerical tests on the feasibility of  these LMIs suggest that Finito and SAGA have similar convergence rates.

\item (SDCA) Statement 1 in the above corollary is very similar to \citet[Theorem 1]{shalev2015sdca}. Actually, when $\alpha=\frac{1}{L+mn}$, \eqref{eq:SDCArateSC1} becomes
\begin{align}
\label{eq:SDCArateSC4}
\mathbb{E}\left[\| x^k-x^*\|^2+\frac{1 }{Lmn}\sum_{i=1}^n \|y_i^k+\nabla f_i(x^*)\|^2\right] \le \left(1-\frac{m}{L+mn} \right)^k R^0
\end{align}
where $R^0=\| x^0-x^*\|^2+\frac{1 }{Lmn}\sum_{i=1}^n \|y_i^0+\nabla f_i(x^*)\|^2$. This is almost identical to \citet[Theorem 1]{shalev2015sdca}. Statement 1 in Corollary \ref{cor:SDCA} is slightly stronger since it only requires $\alpha\le \frac{2}{L+2mn}$. Notice \citet[Theorem 1]{shalev2015sdca} requires $\alpha\le \frac{1}{L+mn}$. Similarly, Statement 2 in Corollary \ref{cor:SDCA} slightly improves \citet[Theorem 2]{shalev2015sdca} by allowing a slightly larger value of $\alpha$.
\end{enumerate}

\subsection{Further Discussion on SAG}
\label{sec:FDSAG}
Finally, we explain why Theorem \ref{thm:lmiS0} fails in recovering the existing SAG rate bounds in 
\citet[Theorem 1]{Schmidt2013}, and briefly sketch how to extend our LMI-based analysis for SAG.
The fundamental reason is that the proof of \citet[Theorem 1]{Schmidt2013} requires $g\in \mathcal{F}(m,L)$, which is stronger than the condition $g\in \mathcal{S}(m,L)$.  Notice in Theorem \ref{thm:lmiS0}, we only incorporate one property of $g$, i.e.
\begin{align}
\label{eq:gradv}
\bmtx v^k-x^* \\ \nabla g(v^k)\emtx^T \bmtx -2mL I_p & (L+m) I_p \\  (L+m) I_p & -2 I_p\emtx  
\bmtx v^k-x^* \\ \nabla g(v^k)\emtx\ge 0
\end{align}
The above inequality couples $v^k$ with $x^*$, and is satisfied for any $g\in \mathcal{S}(m,L)$.
However, the proof for \citet[Theorem 1]{Schmidt2013}  actually relies on some advanced inequalities  \footnote{See (11) in \citet{Schmidt2013} for such an inequality.} coupling $f(v^{k+1})$ with $f(v^k)$. Such advanced inequalities typically require $g\in \mathcal{F}(m, L)$. In other words, the convexity of $g$ is required in the convergence proof of SAG while our proofs for SAGA and Finito hold for some non-convex $g$.

Here is a similar example. The linear convergence of the full gradient descent method does not require convexity of the objective function, and can be proved using a basic quadratic inequality similar to \eqref{eq:gradv}. However, the linear convergence of Nesterov's accelerated method cannot be proved using this simple inequality and relies on some advanced inequalities coupling the current iterates with the past iterates. These advanced inequalities decode convexity much better than the simple inequality used in the proof of the full gradient descent method.
One such advanced inequality is the so-called weighted off-by-one IQC \citep[Lemma 10]{Lessard2014}. 
See \citet[Section 4.5]{Lessard2014} for a detailed discussion on how to incorporate the weighted off-by-one IQC for analysis of Nesterov's accelerated method. The use of the weighted off-by-one IQC typically leads to larger LMIs which are difficult to solve analytically. Very recently, \citet{BinHu2017} have proposed another inequality of similar nature to simplify the LMI-based analysis of Nesterov's accelerated method. The resultant LMI in \citet{BinHu2017} is smaller and can be solved analytically to recover the standard rate of Nesterov's method. To summarize, more advanced quadratic inequalities which further exploit the property of convexity
are required in the analysis of Nesterov's accelerated method, and this makes the analysis of Nesterov's accelerated method much more complicated than the analysis of the full gradient descent method.

Due to similar reasons, the analysis of SAG is more involved than other stochastic methods. Our quadratic constraint approach actually reveals the difficulties in analyzing different methods: SAGA, SDCA, and Finito only require simple constraints \eqref{eq:basicIq1} \eqref{eq:basicIq2} while SAG further requires more advanced quadratic constraints, e.g. weighted off-by-one IQC.

Now we briefly sketch two ways to address the analysis of SAG. First, one can combine our proposed jump system theory with the quadratic constraint derivation procedure in \citet{BinHu2017}. We can obtain a modified LMI condition which searches for a Lyapunov function in the form of $\left((\xi^k-\xi^*)^T P(\xi^k-\xi^*)+g(v^k)-g(x^*)\right)$ where $P$ is some positive semidefinite matrix. We have some preliminary numerical rate results indicating that formulating such an LMI to search for Lyapunov functions in the more general form is sufficient to numerically analyze SAG. Actually, the original proof of  \citet[Theorem 1]{Schmidt2013} constructs such a Lyapunov function \cite[Section B.2]{Schmidt2013}. 

Another way to address the analysis of SAG is to incorporate the weighted off-by-one IQC \citep[Lemma 10]{Lessard2014} into our jump system framework. In this case, we can formulate an LMI condition to search for a quadratic function which is not a Lyapunov function in the technical sense but serves the purpose of linear convergence certifications. See \citet[Remarks on Lyapunov Functions]{Lessard2014} for more explanations.
We also have some preliminary numerical rate results suggesting that applying the weighted off-by-one IQC can recover the linear convergence rates in \citet[Theorem 1]{Schmidt2013} and lead to new linear rate bounds under various assumptions on $f_i$.

Although there is no technical difficulty in incorporating these more advanced quadratic constraints into the LMI formulations for SAG,
we have not been able to analytically solve these resultant LMIs. In addition, the use of such advanced quadratic constraints requires much heavier mathematical notation.  For readability purposes, we do not include a detailed numerical rate analysis of SAG in this paper. See \citet{Lessard2014} and \citet{BinHu2017} for detailed discussions on weighted off-by-one IQC and other more advanced quadratic constraints.

\section{Main Technical Proofs}
\label{sec:MainProof}
We present the proofs of Theorem \ref{thm:lmiS0}, Corollary \ref{cor:SAGA}, Corollary \ref{cor:Finito}, and Corollary \ref{cor:SDCA} in this section. The proof of Theorem \ref{thm:LMISector} is quite tedious, and hence left to Appendix \ref{sec:ProofDR}.
\subsection{Proof of the Main LMI Condition (Theorem \ref{thm:lmiS0})}
\label{sec:ProofMain1}

Based on the state space model in \eqref{eq:spfdJump0} and \eqref{eq:spfdJumpEqi}, we have
\begin{align}
\label{eq:Gshift1}
\begin{split}
\xi^{k+1}-\xi^*&=A_{i_k} (\xi^k-\xi^*)+B_{i_k}(w^k-w^*)\\
v^k-v^*&=C(\xi^k-\xi^*)
\end{split}
\end{align}

Denote $P=\tilde{P}\otimes I_p$, and
  define the Lyapunov function by
  $V(\xi^k)= (\xi^k-\xi^*)^TP(\xi^k-\xi^*)$.  Based on \eqref{eq:Gshift1}, we have the following key relation:
\begin{align}
\label{eq:keyrelation1}
\begin{split}
&\mathbb{E}[V(\xi^{k+1})\,\vert \,\mathcal{F}_{k-1}]\\
=&\mathbb{E}[ (\xi^{k+1}-\xi^*)^TP(\xi^{k+1}-\xi^*)\,\vert\, \mathcal{F}_{k-1}]\\
=& \sum_{i=1}^n \mathbb{P}(i_k=i)\left[A_i(\xi^k-\xi^*)+B_i(w^k-w^*) \right]^TP \left[A_i(\xi^k-\xi^*)+B_i(w^k-w^*) \right]\\
=&\bmtx \xi^k-\xi^* \\ w^k-w^*\emtx^T   \bmtx \frac{1}{n}\sum_{i=1}^n  A_i^T P A_i  & \frac{1}{n}\sum_{i=1}^n A_i^T P B_i \\[2mm] 
  \frac{1}{n}\sum_{i=1}^n B_i^T P A_i & \frac{1}{n}\sum_{i=1}^n B_i^T P B_i \emtx  \bmtx \xi^k-\xi^* \\ w^k-w^*\emtx
\end{split}
\end{align}

Suppose  $D_{\psi 1}=\tilde{D}_{\psi 1}\otimes I_p$ and $D_{\psi 2}=\tilde{D}_{\psi 2}\otimes I_p$. 
Notice we always have
\begin{align}
\bmtx 2L\nu I_p & (L-\nu) I_p \\ (L-\nu) I_p & -2 I_p \emtx=\bmtx LI_p & -I_p\\ \nu I_p & I_p\emtx^T  \bmtx 0_p & I_p \\ I_p & 0_p \emtx \bmtx LI_p & -I_p\\ \nu I_p & I_p\emtx
\end{align}

Moreover, we have $C(\xi^k-\xi^*)=v^k-x^*$.
Hence another key relation also holds as follows
\begin{align}
\label{eq:keyrelation2}
\begin{split}
 &\bmtx \xi^k-\xi^* \\ w^k-w^*\emtx^T \bmtx  C^T D_{\psi 1}^T \\ D_{\psi 2}^T \emtx  \left(\bmtx \lambda_1 & \tilde{0}^T \\ \tilde{0} & \frac{\lambda_2}{n} I_n \emtx \otimes  \bmtx 0_p & I_p\\ I_p & 0_p\emtx\right)
  \bmtx D_{\psi 1}C & D_{\psi2} \emtx  \bmtx \xi^k-\xi^* \\ w^k-w^*\emtx\\
=&\lambda_1 \bmtx v^k-x^* \\ \frac{\sum_{i=1}^n (\nabla f_i(v^k)-\nabla f_i(x^*))}{n}\emtx^T \bmtx 2L\nu I_p & (L-\nu) I_p \\ (L-\nu) I_p & -2 I_p \emtx  \bmtx v^k-x^* \\ \frac{\sum_{i=1}^n (\nabla f_i(v^k)-\nabla f_i(x^*))}{n}\emtx\\
+&  \frac{\lambda_2}{n}\sum_{i=1}^n \bmtx v^k-x^* \\ \nabla f_i(v^k)-\nabla f_i(x^*)\emtx^T \bmtx 2L\gamma I_p & (L-\gamma) I_p \\ (L-\gamma) I_p & -2 I_p \emtx \bmtx v^k-x^* \\ \nabla f_i(v^k)-\nabla f_i(x^*)\emtx\ge 0
\end{split}
\end{align}
The last step follows from \eqref{eq:basicIq1} and \eqref{eq:basicIq2}, which are some simple quadratic inequalities capturing the properties of $f_i$.
Now we can take the Kronecker product of the left side of \eqref{eq:lmiIQCS0} with $I_p$ and immediately get
\begin{align}
\begin{split}
 \bmtx \frac{1}{n}\sum_{i=1}^n  A_i^T P A_i-\rho^2 P  & \frac{1}{n}\sum_{i=1}^n A_i^T P B_i \\[2mm] 
  \frac{1}{n}\sum_{i=1}^n B_i^T P A_i & \frac{1}{n}\sum_{i=1}^n B_i^T P B_i \emtx +& \\
\bmtx  C^T D_{\psi 1}^T \\ D_{\psi 2}^T \emtx  \left(\bmtx \lambda_1 & \tilde{0}^T \\ \tilde{0} & \frac{\lambda_2}{n} I_n \emtx \otimes  \bmtx 0_p & I_p\\ I_p & 0_p\emtx\right)&
  \bmtx D_{\psi 1}C & D_{\psi2} \emtx
  \le 0
\end{split}
\end{align}
Therefore, left and
  right multiply the above inequality  by $[(\xi^k-\xi^*)^T, (w^k-w^*)^T]$ and $[(\xi^k-\xi^*)^T, (w^k-w^*)^T]^T$ and apply \eqref{eq:keyrelation1}, \eqref{eq:keyrelation2}  to show that $V$ satisfies:
\begin{align}
\label{eq:Sdi1}
\mathbb{E}[V(\xi^{k+1})\,\vert\, \mathcal{F}_{k-1}]-\rho^2 V(\xi^k)  \le 0
\end{align}
 
We can take full expectation to get
$\mathbb{E}V(\xi^{k+1})-\rho^2 \mathbb{E}V(\xi^k) \le 0$.
Consequently, we immediately have $\mathbb{E}V(\xi^k)\le \rho^{2k} V(\xi^0)$ and
$\mathbb{E}[\|\xi^k-\xi^*\|^2] \le \rho^{2k}\left(\textup{cond}(P)\|\xi^0-\xi^*\|^2\right)$.

\subsection{Analytical Proof for SAGA (Corollary \ref{cor:SAGA})}
To prove Statement 1,  we set $\gamma=-m$ to reflect the assumption $f_i\in \mathcal{F}(m,L)$. 
Hence LMI \eqref{eq:SAGALMI2} becomes
\begin{align}
\bmtx (1-\rho^2)p_2-2\lambda_1 mL -2\lambda_2 mL & -\alpha p_2 +(L+m)(\lambda_1+\lambda_2) \\ -\alpha p_2 +(L+m)(\lambda_1+\lambda_2) & p_1+\alpha^2 p_2-2(\lambda_1+\lambda_2)\emtx \le 0
\end{align}

By Shur complements, LMIs \eqref{eq:SAGALMI1} \eqref{eq:SAGALMI2} are equivalent to 
\begin{align}
\label{eq:SAGALMI3}
&p_1+\alpha^2p_2-2\lambda_2\le 0\\
\label{eq:SAGALMInS}
&\rho^2\ge 1-\frac{1}{n}-\left(\frac{\alpha^4 p_2^2}{p_1+\alpha^2 p_2-2\lambda_2}-\alpha^2 p_2\right)\frac{1}{np_1}\\
\label{eq:SAGALMIrho1}
&\rho^2\ge 1-2(\lambda_1+\lambda_2) mLp_2^{-1} -\frac{(-\alpha p_2+(L+m)(\lambda_1+\lambda_2))^2}{(p_1+\alpha^2 p_2-2(\lambda_1+\lambda_2))p_2}
\end{align}
We can see that \eqref{eq:SAGALMInS} describes how $\rho^2$ depends on $n$, while \eqref{eq:SAGALMIrho1} describes how $\rho^2$ depends on $m$ and $L$. We need the common feasible set for both \eqref{eq:SAGALMInS} and  \eqref{eq:SAGALMIrho1}.

More formally, given the testing rate $\rho^2=1-\min{\left\{\frac{2L\alpha-1}{(L\alpha-1)n}, 2m\alpha-\frac{\alpha m^2}{(1-L\alpha)L}\right\}}$, it is straightforward to verify  $0\le \rho^2\le 1$ when $\alpha\le \frac{1}{2L}$.
For this particular rate, the condition \eqref{eq:SAGALMI3}  \eqref{eq:SAGALMInS} \eqref{eq:SAGALMIrho1}  is feasible with $p_1=\frac{1}{L}$, $p_2=\frac{1}{\alpha}$, $\lambda_1=0$, and $\lambda_2=\frac{1}{L}$. By Theorem \ref{thm:LMISector}, \eqref{eq:SAGArateSC1} holds as desired. Similarly, given the testing rate $\rho^2=1-\min{\left\{\frac{9L\alpha-4}{(3L\alpha-4)n}, 2m\alpha-\frac{3\alpha m^2}{(4-3L\alpha)L}\right\}}$, we can choose  $p_1=\frac{2}{3L}$, $p_2=\frac{1}{\alpha}$, $\lambda_1=0$,  and $\lambda_2=\frac{1}{L}$ to prove the bound \eqref{eq:SAGArateSC2}.  Therefore, Statement 1 is true.

To prove Statement 2, we set $\gamma=0$ in \eqref{eq:SAGALMI2} to reflect the assumption $f_i \in \mathcal{F}(0,L)$. Again, by Schur complements, LMIs \eqref{eq:SAGALMI1} \eqref{eq:SAGALMI2} are equivalent to \eqref{eq:SAGALMI3}  \eqref{eq:SAGALMInS} and 
\begin{align}
\label{eq:SAGALMIrho2}
\rho^2\ge 1-2\lambda_1 mLp_2^{-1} -\frac{(-\alpha p_2+(L+m)\lambda_1+L\lambda_2)^2}{(p_1+\alpha^2 p_2-2\lambda_1-2\lambda_2)p_2}
\end{align}

Given the testing rate $\rho^2=1-\min{\left\{\frac{2L\alpha-b}{(L\alpha-b)n}, 2(1-b)m\alpha-\frac{m^2(1-b)^2\alpha}{(2-b-L\alpha)L}\right\}}$, it is straightforward to verify  $0\le \rho^2\le 1$ when $b\ge 2L\alpha$.
For this particular rate, the condition~\eqref{eq:SAGALMI3}~\eqref{eq:SAGALMInS}~\eqref{eq:SAGALMIrho2}  is feasible with $p_1=\frac{b}{L}>0$, $p_2=\frac{1}{\alpha}$, $\lambda_1=\frac{1-b}{L}\ge 0$, and $\lambda_2=\frac{b}{L}$. By Theorem \ref{thm:LMISector}, \eqref{eq:SAGArateSC3} holds as desired. When $\alpha=\frac{1}{3L}$, we can choose any $b\in [\frac{2}{3}, 1]$ and \eqref{eq:SAGArateSC3} holds. Hence we can easily obtain \eqref{eq:SAGArateSC4} by choosing $b=\frac{5}{6}$ and applying the fact $\frac{m}{L}\le 1$.

To prove Statement 3, we set $\gamma=L$ in \eqref{eq:SAGALMI2} to reflect the assumption $f_i$ being $L$-smooth. Again, by Schur complements, LMIs \eqref{eq:SAGALMI1} \eqref{eq:SAGALMI2} are equivalent to \eqref{eq:SAGALMI3}, \eqref{eq:SAGALMInS} and 
\begin{align}
\label{eq:SAGALMIrho3}
\rho^2\ge 1-2\lambda_1 mLp_2^{-1} +2\lambda_2 L^2 p_2^{-1}-\frac{(-\alpha p_2+(L+m)\lambda_1)^2}{(p_1+\alpha^2 p_2-2\lambda_1-2\lambda_2)p_2}
\end{align}

Given the testing rate $\rho^2=1-\min{\left\{\frac{b-2}{(b-1)n},  \frac{3m\alpha}{2}-2bL^2\alpha^2\right\}}$, it is straightforward to verify  $0\le \rho^2\le 1$ when $2\le b \le \frac{3m}{4\alpha L^2}$.
For this particular rate, the condition \eqref{eq:SAGALMI3} \eqref{eq:SAGALMInS} \eqref{eq:SAGALMIrho3}  is feasible with $p_1=b\alpha>0$, $p_2=\frac{1}{\alpha}$, $\lambda_1=\frac{1}{L}\ge 0$, and $\lambda_2=b\alpha$. Notice the facts $m \le L$ and $b\ge 2$ are required when checking the feasibility of the LMI condition. By Theorem \ref{thm:LMISector}, \eqref{eq:SAGArateSC5} holds as desired. When $\alpha=\frac{m}{8L^2}$, we can choose any $b\in [2, 3]$ and \eqref{eq:SAGArateSC5} holds. Hence we can easily obtain \eqref{eq:SAGArateSC6} by choosing $b=3$ and applying the fact $\frac{m}{L}\le 1$. Similarly, when $\alpha=\frac{m^2}{4(m^2n +L^2)}$,  we can choose any $2\le b \le \frac{3(m^2n+L^2)}{L^2}$ and \eqref{eq:SAGArateSC5} holds. Hence we can also obtain \eqref{eq:SAGArateSC7} by choosing $b=\frac{2(m^2n+L^2)}{L^2}$ and apply the fact $\frac{2m^2}{2m^2 n+L^2}\ge \frac{m^2}{8(m^2 n+L^2)}$. This completes the proof.

\begin{remark}
\label{rem:SAGA}
Based on the above proof, we can actually recover two other known results in \citet{defazio2014}. First, it is known that SAGA achieves the rate $\rho^2=1-\frac{m}{2(mn+L)}$ given the assumption $f_i\in\mathcal{F}(m,L)$ and the stepsize $\alpha=\frac{1}{2(mn+L)}$. To recover this result, we first consider the case where $L\ge 2m$. Clearly $\alpha L<\frac{1}{2}$. Then the formula \eqref{eq:SAGArateSC1} leads to a rate
$\rho^2=1- \frac{m}{mn+L}+\frac{m^2}{(L+2mn)L}$.
If $L\ge 2m$, then the above rate bound is always better than $\rho^2=1-\frac{m}{2(mn+L)}$. On the other hand,
if $L\le 2m$, we can use $p_2=\frac{1}{\alpha}$, $\lambda_1=0$, $\lambda_2=\frac{1}{L}$ and $p_1=0.75\lambda_2$ to prove the LMI condition is feasible with 
$\rho^2=1-\min{\left\{\frac{15mn-L}{n(15mn+9L)}, \frac{m}{mn+L}-\frac{2m^2}{(3L+5mn)L}\right\}}$.
Under the condition $L\le 2m$, this rate bound is always lower than $1-\frac{m}{2(mn+L)}$. Consequently, we successfully recover the existing rate bound $\rho^2=1-\frac{m}{2(mn+L)}$ for $\alpha=\frac{1}{2(mn+L)}$.
 Second, when $f_i$ is only assumed to convex and smooth, i.e. $f_i\in \mathcal{F}(0,L)$, we can also choose $\alpha=\frac{1}{3(mn+L)}$  in \eqref{eq:SAGArateSC3} and set $b=\frac{2}{3}$. This leads to
\begin{align}
\begin{split} 
\mathbb{E}\left[\| x^k-x^*\|^2\right] &\le \left(1-\min{\left\{\frac{2m}{L+2mn}, \frac{2m}{9(L+mn)}-\frac{m^2}{27L^2+36mnL}\right\}}\right)^k R^0\\
&=\left(1-\frac{2m}{9(L+mn)}+\frac{m^2}{27L^2+36mnL}\right)^k R^0\\
&\le \left(1-\frac{m}{6(mn+L)}\right)^k R^0
\end{split}
\end{align}
where  $R^0=\|x^0-x^*\|^2+\frac{2}{9(mn+L)L}\sum_{i=1}^n\|y_i^0-\nabla f_i(x^*)\|^2$. 
The rate bound here is also consistent with the known result in \citet{defazio2014}.
\end{remark}

\subsection{Analytical Proof for Finito (Corollary \ref{cor:Finito})}
First, we need the following linear algebra result to relax the LMI conditions \eqref{eq:FinitoLMI10} \eqref{eq:FinitoLMI20} to some simpler testing conditions.
\begin{lemma}
\label{lem:lineartrick1}
Suppose $Y_{11}$, $Y_{12}$, $Y_{22}$, $\alpha$, and $n$ are scalars. In addition, $Y_{11}\le 0$, $Y_{22}\le 0$, $\alpha>0$ and $n>0$.
The following two statements are true.
\begin{enumerate}
\item  If $Y_{12}\le 0$, then $\bmtx Y_{11} +\alpha n Y_{12}& Y_{12} \\ Y_{12} & Y_{22}+\frac{Y_{12}}{\alpha n}\emtx\le 0$.
\item If $Y_{12}\ge 0$, then $\bmtx Y_{11}-\alpha n Y_{12} & Y_{12} \\ Y_{12} & Y_{22}-\frac{Y_{12}}{\alpha n}\emtx\le 0$.
\end{enumerate}
\end{lemma}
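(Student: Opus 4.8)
The plan is to prove both statements by showing that the indicated symmetric $2\times 2$ matrix is negative semidefinite, and the cleanest way to do this is to split each matrix into a sum of two pieces that are individually negative semidefinite. For Statement~1, I would write
\[
\bmtx Y_{11}+\alpha n Y_{12} & Y_{12}\\ Y_{12} & Y_{22}+\frac{Y_{12}}{\alpha n}\emtx
=\bmtx Y_{11} & 0\\ 0 & Y_{22}\emtx+Y_{12}\,uu^T,
\qquad u:=\bmtx \sqrt{\alpha n}\\[1mm] 1/\sqrt{\alpha n}\emtx,
\]
where $u$ is well defined because $\alpha>0$ and $n>0$. The first term is diagonal with nonpositive entries by the hypotheses $Y_{11}\le 0$ and $Y_{22}\le 0$, hence negative semidefinite; the rank-one outer product $uu^T$ is positive semidefinite, so $Y_{12}uu^T$ is negative semidefinite whenever $Y_{12}\le 0$. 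Since a sum of negative semidefinite matrices is negative semidefinite, the claim follows.

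For Statement~2 I would use the analogous decomposition
\[
\bmtx Y_{11}-\alpha n Y_{12} & Y_{12}\\ Y_{12} & Y_{22}-\frac{Y_{12}}{\alpha n}\emtx
=\bmtx Y_{11} & 0\\ 0 & Y_{22}\emtx-Y_{12}\,vv^T,
\qquad v:=\bmtx \sqrt{\alpha n}\\[1mm] -1/\sqrt{\alpha n}\emtx.
\]
Here $-Y_{12}vv^T$ is negative semidefinite because $Y_{12}\ge 0$ and $vv^T$ is positive semidefinite, and again the diagonal part is negative semidefinite, so the sum is negative semidefinite. The only thing to keep straight across the two cases is the bookkeeping of signs: in each case the sign of $Y_{12}$ is precisely what makes the rank-one correction point in the negative-semidefinite direction, while $\alpha>0$, $n>0$ guarantee the square roots are real and $Y_{11},Y_{22}\le 0$ handle the diagonal block.

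There is no genuine obstacle here, since the result is a short piece of linear algebra; the main thing is simply to present the factorization cleanly and check that $uu^T$ and $vv^T$ reproduce exactly the off-diagonal entry and the diagonal corrections. As a sanity check (or an alternative proof) one can instead invoke the elementary criterion that a symmetric $2\times2$ matrix $\bsmtx a & b\\ b & c\esmtx$ is negative semidefinite iff $a\le 0$, $c\le 0$, and $ac-b^2\ge 0$; in either case expanding the determinant cancels the $b^2$ term and leaves a sum of products of like-signed factors that is nonnegative, while the diagonal nonpositivity is immediate from the sign hypotheses. This lemma is exactly the ``linear algebra trick'' needed to pass from the block LMIs \eqref{eq:FinitoLMI10}--\eqref{eq:FinitoLMI20} to the scalar testing conditions used in Theorem~\ref{thm:LMISector}.
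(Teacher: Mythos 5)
Your proof is correct and is essentially the paper's own argument: the paper proves Statement 1 from the fact that $\bsmtx \alpha n & 1\\ 1 & \frac{1}{\alpha n}\esmtx \ge 0$ (and the sign-flipped analogue for Statement 2), and your rank-one factorizations $uu^T$ and $vv^T$ are exactly these matrices written explicitly. No gap; the two presentations differ only in that you make the outer-product structure explicit.
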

\begin{proof}
Statement 1 can be proved using the fact $\bsmtx \alpha n & 1\\ 1 & \frac{1}{\alpha n}\esmtx \ge 0$. Statement 2 can be proved using the fact $\bsmtx \alpha n & -1\\ -1 & \frac{1}{\alpha n}\esmtx \ge 0$.
\end{proof}

Next, we relax the LMIs \eqref{eq:FinitoLMI10} \eqref{eq:FinitoLMI20} to some simpler (but more conservative) testing conditions. The relaxed conditions are sufficiently useful for analysis of Finito under some big data condition.
\begin{corollary}
\label{cor:Finito2}
Consider Finito \eqref{eq:Finitox} \eqref{eq:Finitoy} with $i_k$ sampled from a uniform distribution. Define $v^k= \frac{1}{n}\sum_{i=1}^n x_i^k-\alpha\sum_{i=1}^n  y_i^k$. Suppose $g\in \mathcal{S}(m,L)$ with $m>0$, and $\gamma$ is defined by \eqref{eq:gammav} based on assumptions on $f_i$. Given any testing rate $1-\frac{1}{n}\le \rho^2 \le 1$, if there exist positive scalars $p_1$, $p_4$, and nonnegative scalars $\lambda_1$, $\lambda_2$ such that
\begin{align}
\alpha^2-2\lambda_2 +p_1< 0\\
\label{eq:FinitoY11}
n(1-\rho^2)p_1-p_1+2\alpha^2-\frac{2\alpha^4}{\alpha^2-2\lambda_2+p_1}\le 0 \\
\label{eq:FinitoY22}
n(1-\rho^2)p_4-p_4+\frac{2}{n^2}-\frac{2\alpha^2}{n^2(\alpha^2-2\lambda_2+p_1)}\le 0\\
\label{eq:FinitoZ12}
p_4-\rho^2+2L\gamma \lambda_2-2Lm \lambda_1+1-\frac{((L+m)\lambda_1+(L-\gamma)\lambda_2-\alpha)^2}{\alpha^2-2\lambda_1-2\lambda_2+p_1}\le 0
\end{align}
then Finito  \eqref{eq:Finitox} \eqref{eq:Finitoy} with any initial condition $x_i^0\in \R^p$ and $y_i^0\in \R^p$ satisfies
\begin{align}
\mathbb{E}\left[p_4\sum_{i=1}^n\| x_i^k-x^*\|^2+p_1\sum_{i=1}^n\|y_i^k-\nabla f_i(x^*)\|^2+\|v^k-x^*\|^2\right] \le \rho^{2k}R^0
\end{align}
where $R^0=p_4\sum_{i=1}^n\| x_i^0-x^*\|^2+p_1\sum_{i=1}^n\|y_i^0-\nabla f_i(x^*)\|^2+\|v^0-x^*\|^2$.
\end{corollary}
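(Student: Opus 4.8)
The plan is to specialize the general Finito LMIs \eqref{eq:FinitoLMI10} and \eqref{eq:FinitoLMI20} of Theorem~\ref{thm:LMISector} to a restricted Lyapunov function and then collapse the resulting $3\times 3$ matrix inequalities into the scalar conditions stated in the corollary. First I would pin down which parameterization of $P$ produces the proposed Lyapunov function. Writing $v^k-x^*=\frac{1}{n}\sum_i(x_i^k-x^*)-\alpha\sum_i(y_i^k-\nabla f_i(x^*))$, expanding $\|v^k-x^*\|^2$, and matching against the block form $P=\bsmtx p_1 I_n+p_2 ee^T & p_3 ee^T \\ p_3 ee^T & p_4 I_n+p_5 ee^T\esmtx\otimes I_p$ shows that the Lyapunov function in the corollary corresponds exactly to the choice $p_2=\alpha^2$, $p_3=-\frac{\alpha}{n}$, $p_5=\frac{1}{n^2}$, with $p_1,p_4>0$ left free. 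Substituting these three values collapses the five variables of Theorem~\ref{thm:LMISector} onto the four $(p_1,p_4,\lambda_1,\lambda_2)$ of the corollary, and it then suffices to show that the hypotheses \eqref{eq:FinitoY11}--\eqref{eq:FinitoZ12} imply these two specialized $3\times 3$ inequalities.

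For the first LMI the substitution makes the $(3,3)$ entry equal to $p_1+\alpha^2-2\lambda_2$, which the standing hypothesis $\alpha^2-2\lambda_2+p_1<0$ forces to be strictly negative; it is therefore a legitimate Schur-complement pivot. Eliminating the third coordinate yields a $2\times 2$ matrix whose off-diagonal entry is $-\frac{\alpha}{n}\cdot\frac{p_1-2\lambda_2}{p_1+\alpha^2-2\lambda_2}$, and since $p_1-2\lambda_2<-\alpha^2<0$ while the pivot is negative, this cross term is nonpositive. I would then invoke Lemma~\ref{lem:lineartrick1}(1), which uses the rank-one identity $\bsmtx \alpha n & 1 \\ 1 & \frac{1}{\alpha n}\esmtx\ge 0$ to absorb the cross term into the diagonal; the induced split reproduces $Y_{11}$ as the left-hand side of \eqref{eq:FinitoY11} and $Y_{22}$ as the left-hand side of \eqref{eq:FinitoY22}. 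Thus \eqref{eq:FinitoY11}--\eqref{eq:FinitoY22} are sufficient (in fact essentially equivalent) for \eqref{eq:FinitoLMI10}.

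The second LMI \eqref{eq:FinitoLMI20} is the crux, since its entries $X_{11},X_{12},X_{13}$ mix contributions of different orders in $n$. My target is the $2\times 2$ matrix $\bsmtx p_4-\rho^2+2L\gamma\lambda_2-2Lm\lambda_1+1 & (L+m)\lambda_1+(L-\gamma)\lambda_2-\alpha \\ (L+m)\lambda_1+(L-\gamma)\lambda_2-\alpha & \alpha^2-2\lambda_1-2\lambda_2+p_1\esmtx\le 0$, whose $(2,2)$ entry is negative by the first hypothesis together with $\lambda_1\ge 0$, and whose Schur complement with respect to that entry is precisely the left-hand side of \eqref{eq:FinitoZ12}. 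To descend from the $3\times 3$ to this $2\times 2$ I would first apply a congruence rescaling the gradient coordinate by $\sqrt{n}$ so that its diagonal becomes $\alpha^2-2\lambda_1-2\lambda_2+p_1$, then use Lemma~\ref{lem:lineartrick1} once more to fold the surviving off-diagonal couplings into the diagonal, and finally discard the leftover nonpositive $O(1/n)$ remainders; this last discard is where the relaxation becomes strictly conservative and where the implicit big-data regime enters. Once \eqref{eq:FinitoLMI10} and \eqref{eq:FinitoLMI20} are established for this parameterization, the convergence conclusion follows verbatim from the Finito statement of Theorem~\ref{thm:LMISector}.

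The main obstacle I anticipate is exactly the bookkeeping for \eqref{eq:FinitoLMI20}: the congruence rescaling, the Lemma~\ref{lem:lineartrick1} split, and the discarding of lower-order terms must be orchestrated so that no genuinely positive contribution is thrown away and the surviving $2\times 2$ matches the intended form. Choosing correctly between statement~1 and statement~2 of Lemma~\ref{lem:lineartrick1} according to the sign of each cross term, and verifying that the discarded remainder is indeed nonpositive under the assumption $1-\frac{1}{n}\le\rho^2$, is where I expect the delicate work to lie.
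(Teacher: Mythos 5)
Your proposal follows essentially the same route as the paper: fix $p_2=\alpha^2$, $p_3=-\frac{\alpha}{n}$, $p_5=\frac{1}{n^2}$, Schur-complement on the strictly negative pivot $p_1+\alpha^2-2\lambda_2$ (resp.\ $p_1+\alpha^2-2\lambda_1-2\lambda_2$), split the resulting rank-one cross terms with Lemma~\ref{lem:lineartrick1} according to their signs, use $\rho^2\ge 1-\frac{1}{n}$ to make the residual diagonal terms nonpositive, and invoke the Finito statement of Theorem~\ref{thm:LMISector}. The only differences are organizational — the paper pivots on the $(3,3)$ entry of \eqref{eq:FinitoLMI20} and finds the left side of \eqref{eq:FinitoZ12} appearing (times $\alpha$) as the off-diagonal $Z_{12}$ of the reduced $2\times 2$, rather than as a Schur complement of your $(2,2)$--$(3,3)$ block — and you should also record the easy verification that $\bsmtx p_1+n\alpha^2 & -\alpha \\ -\alpha & p_4+\frac{1}{n}\esmtx>0$, which Theorem~\ref{thm:LMISector} requires of the chosen $\tilde{P}$.
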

\begin{proof}
Consider $p_2=\alpha^2$, $p_3=-\frac{\alpha}{n}$, and $p_5=\frac{1}{n^2}$. Clearly, we have
\begin{align}
\bmtx p_1+np_2 & np_3 \\ np_3 & p_4+np_5\emtx=\bmtx p_1 & 0 \\ 0& p_4\emtx +n \bmtx -\alpha \\ \frac{1}{n}\emtx \bmtx -\alpha & \frac{1}{n}\emtx >0
\end{align}

 Applying Schur complement with respect to the $(3,3)$-entry of \eqref{eq:FinitoLMI10}, we can immediately rewrite \eqref{eq:FinitoLMI10} as $p_1+p_2-2\lambda_2=\alpha^2-2\lambda_1+p_1\le 0$ and $\bsmtx Y_{11} +\alpha n Y_{12}& Y_{12} \\ Y_{12} & Y_{22}+\frac{Y_{12}}{\alpha n}\esmtx\le 0$, where $Y_{11}$ is equal to the left side of \eqref{eq:FinitoY11}, $Y_{22}$ is equal to the left side of \eqref{eq:FinitoY22}, and $Y_{12}=\frac{\alpha^3}{n(\alpha^2-2\lambda_2+p_1)}-\frac{\alpha}{n}$. Similarly, we can apply Schur complement with respect to the $(3,3)$-entry of \eqref{eq:FinitoLMI20} and rewrite \eqref{eq:FinitoLMI20} as $p_1+p_2-2\lambda_1-2\lambda_2\le 0$ and $\bsmtx Z_{11} -\alpha n Z_{12}& Z_{12} \\ Z_{12} & Z_{22}-\frac{Z_{12}}{\alpha n}\esmtx\le 0$, where $Z_{11}=p_1(1-\rho^2-\frac{1}{n})$, $Z_{22}=p_4(1-\rho^2-\frac{1}{n})$, and $Z_{12}$ is equal to the multiplication of $\alpha$ and the left side of \eqref{eq:FinitoZ12}. Based on the conditions in the corollary statement, we can directly apply Lemma \ref{lem:lineartrick1} to show that \eqref{eq:FinitoLMI10} and \eqref{eq:FinitoLMI20} hold. Finally, notice 
\begin{align}
\bmtx p_1I_n+p_2 ee^T & p_3 ee^T \\ p_3 ee^T & p_4I_n+p_5ee^T\emtx=\bmtx p_1  & 0 \\ 0 & p_4 \emtx\otimes I_n+\bmtx -\alpha e \\ \frac{1}{n}e \emtx \bmtx -\alpha e^T & \frac{1}{n}e^T \emtx
\end{align}
We can directly apply Statement 3 in Theorem \ref{thm:LMISector} to complete the proof of this corollary.
\end{proof}

Now we can choose $p_1$, $p_4$, $\lambda_1$ and $\lambda_2$ to prove Corollary \ref{cor:Finito}.
Notice \eqref{eq:FinitoY11}, \eqref{eq:FinitoY22}, and \eqref{eq:FinitoZ12} are equivalent to
\begin{align}
\label{eq:FinitoY11n}
\rho^2&\ge 1-\frac{1}{n}+\frac{2\alpha^2(p_1-2\lambda_2)}{np_1(\alpha^2-2\lambda_2+p_1)}\\
\label{eq:FinitoY22n}
\rho^2&\ge 1-\frac{1}{n}+\frac{2(p_1-2\lambda_2)}{n^3p_4(\alpha^2-2\lambda_2+p_1)}\\
\label{eq:FinitoZ12n}
\rho^2&\ge 1-2Lm \lambda_1+2L\gamma \lambda_2+p_4-\frac{((L+m)\lambda_1+(L-\gamma)\lambda_2-\alpha)^2}{\alpha^2-2\lambda_1-2\lambda_2+p_1}
\end{align}
\begin{enumerate}
\item To prove Statement 1,  we set $\gamma=-m$ to reflect the assumption $f_i\in \mathcal{F}(m,L)$. 
We choose $p_1=\frac{\alpha}{L}$, $p_4=0.5m\alpha$, $\lambda_1=0$, and $\lambda_2=\frac{\alpha}{L}$. Then \eqref{eq:FinitoY11n}, \eqref{eq:FinitoY22n}, and \eqref{eq:FinitoZ12n} become
\begin{align}
\label{eq:FinitoCon1}
\rho^2&\ge 1-\frac{1}{n}+\frac{2\alpha L}{n(1-\alpha L)}\\
\label{eq:FinitoCon2}
\rho^2&\ge 1-\frac{1}{n}+\frac{4}{n^3 m \alpha (1-L\alpha)}\\
\label{eq:FinitoCon3}
\rho^2&\ge 1-1.5m \alpha+\frac{m^2\alpha}{L(1-L\alpha)}
\end{align}
When $\alpha=\frac{1}{5L}$, the testing rate $\rho^2=1-\min{\left\{\frac{1}{2n}, \frac{m}{20L}\right\}}$ satisfies \eqref{eq:FinitoCon1} and \eqref{eq:FinitoCon3}. In addition, this testing rate also satisfies \eqref{eq:FinitoCon2} under the further assumption $n\ge \sqrt{\frac{50L}{m}}$.  Therefore, Statement 1 directly follows from Corollary \ref{cor:Finito2}.

\item
To prove Statement 2, we set $\gamma=0$ to reflect the assumption $f_i \in \mathcal{F}(0,L)$. 
We choose $p_1=\frac{\alpha}{2L}$, $p_4=0.5m\alpha$, $\lambda_1=\frac{\alpha}{2L}$, and $\lambda_2=\frac{\alpha}{2L}$. Then \eqref{eq:FinitoY11n}, \eqref{eq:FinitoY22n}, and \eqref{eq:FinitoZ12n} become
\begin{align}
\label{eq:FinitoCon1b}
\rho^2&\ge 1-\frac{1}{n}+\frac{4\alpha L}{n(1-2\alpha L)}\\
\label{eq:FinitoCon2b}
\rho^2&\ge 1-\frac{1}{n}+\frac{4}{n^3 m \alpha (1-2L\alpha)}\\
\label{eq:FinitoCon3b}
\rho^2&\ge 1-0.5m \alpha+\frac{m^2\alpha}{2L(3-2L\alpha)}
\end{align}
When $\alpha=\frac{1}{8L}$, the testing rate $\rho^2=1-\min{\left\{\frac{1}{3n}, \frac{5m}{176L}\right\}}$ satisfies \eqref{eq:FinitoCon1b} and \eqref{eq:FinitoCon3b}. In addition, this testing rate also satisfies \eqref{eq:FinitoCon2b} under the further assumption $n\ge \sqrt{\frac{64L}{m}}$.  Therefore, Statement 2 directly follows from Corollary \ref{cor:Finito2}.

\item
To prove Statement 3, we set $\gamma=L$ to reflect the assumption $f_i$ being $L$-smooth. 
We choose $p_1=4\alpha^2$, $p_4=0.75m\alpha$, $\lambda_1=\frac{\alpha}{L}$, and $\lambda_2=4\alpha^2$. Then \eqref{eq:FinitoY11n}, \eqref{eq:FinitoY22n}, and \eqref{eq:FinitoZ12n} become
\begin{align}
\label{eq:FinitoCon1c}
\rho^2&\ge 1-\frac{1}{3n}\\
\label{eq:FinitoCon2c}
\rho^2&\ge 1-\frac{1}{n}+\frac{32}{9n^3 m \alpha}\\
\label{eq:FinitoCon3c}
\rho^2&\ge 1-1.25m \alpha+8L^2\alpha^2+\frac{m^2\alpha}{L(2+3L\alpha)}
\end{align}
When $\alpha=\frac{1}{2nm}$, the testing rate $\rho^2=1-\frac{1}{3n}$ satisfies \eqref{eq:FinitoCon1c}. This testing rate also satisfies \eqref{eq:FinitoCon2c} if $n\ge 11$. Moreover, this testing rate also satisfies \eqref{eq:FinitoCon3c} under the further assumption $n\ge \frac{48L^2}{m^2}$. Due to the fact $L\ge m$, we always have $n\ge 11$ when $n\ge \frac{48L^2}{m^2}$.  Therefore, Statement 3 directly follows from Corollary \ref{cor:Finito2}.
\end{enumerate}
Now the proof is complete.

\subsection{Analytical Proof for SDCA (Corollary \ref{cor:SDCA})}
To prove Statement 1 in Corollary \ref{cor:SDCA},  we set $\gamma=0$ to reflect the assumption $f_i\in \mathcal{F}(0,L)$. When $\alpha\le\frac{2}{L+2mn}$, we have $\tilde{\alpha}=\alpha m n\le \frac{2mn}{L+2mn}<1$.
 Given the testing rate $\rho^2=1-m\alpha=1-\frac{\tilde{\alpha}}{n}$, it is straightforward to verify  $0\le \rho^2\le 1$ when $\alpha\le \frac{2}{L+2mn}$. 
For this particular rate, the coupled LMI conditions \eqref{eq:SDCALMI1} and \eqref{eq:SDCALMI2} in Statement~2 of Theorem~\ref{thm:LMISector} are feasible with $p_1=\frac{1}{\tilde{\alpha}}$, $p_2=\frac{1-\tilde{\alpha}}{\tilde{\alpha}^2}$, $\lambda_1=0$, and $\lambda_2=\frac{(1-\tilde{\alpha})mn}{\tilde{\alpha}L}$. To see this, first notice $p_2>0$ and $0<\lambda_2\le \frac{1}{2}$ given the fact $\tilde{\alpha}\le \frac{2mn}{L+2mn}<1$. With the given rate $\rho^2=1-\frac{\tilde{\alpha}}{n}$ and the current choice of $(p_1, p_2, \lambda_1, \lambda_2)$, LMIs \eqref{eq:SDCALMI1} and \eqref{eq:SDCALMI2} become
\begin{align}
\bmtx \frac{n}{\tilde{\alpha}}(1-\rho^2)-1 & 0 \\  0 & 1-2\lambda_2 \emtx&=\bmtx 0 & 0 \\ 0 & 1-2\lambda_2\emtx\le 0\\
\bmtx -1-\frac{2n(1-\tilde{\alpha})}{\tilde{\alpha}}+(1-\rho^2)(\frac{n}{\tilde{\alpha}}+\frac{n^2(1-\tilde{\alpha})}{\tilde{\alpha}^2}) & 0\\ 0 & 1-2\lambda_2 \emtx &=\bmtx n(1-\frac{1}{\tilde{\alpha}}) & 0 \\ 0 & 1-2\lambda_2\emtx \le 0
\end{align}
The above LMIs hold due to the fact $\lambda_2\le \frac{1}{2}$ and $\tilde{\alpha}<1$.  
By Theorem \ref{thm:LMISector}, \eqref{eq:SDCArateSC1} holds.

To prove Statement 2 in Corollary \ref{cor:SDCA},  we set $\gamma=0$ to reflect the assumption $f_i\in \mathcal{F}(0,L)$. When $\alpha\le\frac{m}{L^2+m^2n}$, we have $\tilde{\alpha}=\alpha m n\le \frac{m^2 n}{L^2+m^2n}<1$.
 Given the testing rate $\rho^2=1-m\alpha=1-\frac{\tilde{\alpha}}{n}$, it is straightforward to verify  $0\le \rho^2\le 1$ when $\alpha\le \frac{m}{L^2+m^2n}$. 
For this particular rate, the coupled LMI conditions \eqref{eq:SDCALMI1} and \eqref{eq:SDCALMI2} in Statement~2 of Theorem~\ref{thm:LMISector} are feasible with $p_1=\frac{1}{\tilde{\alpha}}$, $p_2=\frac{1-\tilde{\alpha}}{\tilde{\alpha}^2}$, $\lambda_1=\frac{(1-\tilde{\alpha})mn}{\tilde{\alpha}L}$, and $\lambda_2=\frac{1}{2}$. With the given rate $\rho^2=1-\frac{\tilde{\alpha}}{n}$ and the current choice of $(p_1, p_2, \lambda_1, \lambda_2)$, the left side of \eqref{eq:SDCALMI1} becomes a zero matrix and clearly \eqref{eq:SDCALMI1} holds. In addition, \eqref{eq:SDCALMI2} becomes
\begin{align}
\bmtx n(1-\frac{1}{\tilde{\alpha}})+\frac{L^2}{m^2} & 0 \\ 0 & -2\lambda_1 \emtx \le 0
\end{align}
The above inequality holds since we have $\tilde{\alpha}\le \frac{m^2 n}{L^2+m^2 n}$. 
By Theorem \ref{thm:LMISector}, we can conclude that Statement 2 is true.

\section{Conclusion and Future Work}
\label{sec:ConR}

In this paper, we developed a unified routine for analysis of stochastic optimization methods and demonstrate the utility of our proposed routine by analyzing SAGA, Finito, and SDCA under various conditions (with or without individual convexity, etc). Our routine includes five steps:
\begin{enumerate}
\item Choose proper $(A_i, B_i, C)$ to rewrite the stochastic optimization method  as a special case of our general jump system model \eqref{eq:spfdJump0}.
\item Apply Theorem \ref{thm:lmiS0} to obtain an LMI testing condition for the linear convergence rate analysis. 
\item Test LMI \eqref{eq:lmiIQCS0} numerically to  narrow down Lyapunov function structures and useful function inequalities required by the further analysis.
\item Apply linear algebra tricks to convert LMI \eqref{eq:lmiIQCS0} into some equivalent small LMIs whose size do not depend on $n$. 
\item Construct analytical proofs for linear convergence rate bounds using the resultant small LMIs. 
\end{enumerate}

The first step is case-dependent. However, this step is usually straightforward and technically not difficult.
The second and third steps are completely automated and require no tricks at all. 
These two steps can even be done for non-uniform sampling strategy if we slightly modify the LMI condition in Theorem~\ref{thm:lmiS0}. In principle, one can implement \eqref{eq:lmiIQCS0} once, and just needs to update $(\tilde{A}_i, \tilde{B}_i, \tilde{C})$ matrices given any new method.
The fourth step is case-dependent but only requires very basic linear algebra tricks. As long as the assumptions on $f_i$ are the same for all $i$ and a uniform sampling is used, one should be able to obtain such equivalent small LMIs. The fifth step is the most technical step. This step is case-dependent and can be non-trivial for some complicated algorithms, e.g. Finito. However, at least one can numerically solve the resultant small LMIs using semidefinite programming solvers and use the numerical results to guide the analytical proofs.

In the third step, one may realize that LMI \eqref{eq:lmiIQCS0} is not sufficient for analysis of certain methods, e.g. SAG.
Then one needs to exploit more advanced function properties and incorporate more advanced quadratic constraints into the LMI formulations. See \citet{Lessard2014} and \citet{BinHu2017} for detailed discussions on  weighted off-by-one IQC and other advanced quadratic constraints. The applications of these advanced quadratic constraints require much heavier mathematical notation.
 A detailed analysis of more complicated stochastic methods using such advanced quadratic constraints is beyond the scope of this paper, and will be pursued in future research.

We believe our work is just a starting point for further studies of empirical risk minimization using tools from control theory.
 We briefly comment on several possible extensions of our proposed framework to conclude the paper.

\vspace{0.1in}
\textbf{Non-uniform sampling strategy}: Theorem \ref{thm:lmiS0} can be easily modified to handle non-uniform sampling strategy. However, the LMI dimension reduction in this case is non-trivial since the solution for the resultant LMI cannot be easily parameterized using a few scalar decision variables. It requires more efforts to investigate how to reduce the dimension of the resultant LMI in this case. A possible solution may involve properly scaling Lyapunov functions with the sampling distribution.

\vspace{0.1in}

\textbf{Stochastic quadratic constraints and SVRG}: 
SVRG \citep{johnson2013} is an important method which cannot be represented by our jump system model \eqref{eq:spfdJump0}.
The main issue is that SVRG has a deterministic periodic component which cannot be captured by a jump system model. One needs to take the periodicity and the randomness into accounts simultaneously. 
It will be interesting to develop an LMI-based approach for automated analysis and design of SVRG and its non-convex variants \citep{allen2016}. One possible idea is to absorb the randomness and the periodicity into an uncertainty block whose input/output behavior can be characterized by some stochastic quadratic constraints. Similar ideas have already been used to recover the standard convergence results of the SG method  \citep[Chapter 6]{Binthesis}.

\vspace{0.1in}

\textbf{Automated design procedure of stochastic optimization methods}: One may apply our proposed LMIs to numerically design stochastic optimization methods for practical problems. A direct design approach relies on grid search and is similar to the design procedure in \citet[Section 6]{Lessard2014}. A more general design approach may be developed using the following sparse optimization formulation. Based on our general model \eqref{eq:spfdJump0}, a stochastic method is typically characterized by the matrices $(A_i, B_i, C)$. Hence, the design of stochastic methods can be formulated as a sparse optimization problem where we need to select $(A_i, C)$ and sparse $B_i$ for $i=1, \ldots, n$ to minimize the convergence rate $\rho$ under the LMI constraint \eqref{eq:lmiIQCS0} and some other structure constraints. The sparsity of $B_i$ is important since it ensures the per-iteration cost of the resultant method to be low.

\vspace{0.1in}

\textbf{Larger family of non-convex functions}: Notice the main assumption in this paper is $g\in \mathcal{S}(m, L)$, and the convexity of $g$ is not required. There exist convergence results for other families of non-convex functions, e.g. functions satisfying  Polyak-Lojasiewicz (PL) inequality \citep{karimi2016linear, reddi2016stochastic, reddi2016fast}. It is interesting to investigate how to extend our quadratic constraint approach for more general non-convex functions.

\vspace{0.1in}

\textbf{Accelerated methods}: Various acceleration techniques \citep{nitanda2014stochastic,lin2015universal, shalev2016acc, defazio2016simple}  have been proposed to improve the convergence guarantees of the stochastic optimization methods when the big data condition is not met. We will extend our LMI method to analyze stochastic accelerated methods (with or without individual convexity) in the future.

\vspace{0.1in}

\textbf{Randomly-Permuted ADMM with multiple blocks}: 
The alternating direction method of multipliers (ADMM) \citep{boyd2011ADMM} is an important distributed optimization algorithm. There are some initial convergence results on ADMM with multiple blocks \citep{hong2012linear, chen2016direct}. The quantification of the mean-square convergence rates of the so-called randomly-permuted ADMM  with multiple blocks \citep{sun2015expected} remains an open topic. IQCs have been successfully applied to analyze ADMM with two blocks \citep{nishihara2015}. The extension of jump system theory for random-permuted ADMM with multiple blocks is an important future task.

\vspace{0.1in}

\textbf{Asynchronous settings}:  
In parallel computing,  the algorithm performance will typically be impacted by the communication delay and memory contention \citep{recht2011hogwild, zhang2014asynchronous}. In this case, it is necessary to assess the robustness of the optimization methods with respect to the delays in the gradient update.
There exist many IQCs for time-varying delays in the controls literature \citep{Kao2012,  Kao2004, Kao2007, Pfifer2015d}. One may apply a scaling trick to tailor these IQCs for convergence rate analysis \citep{hu16}.
Hence the IQC analysis may be extended to study the impacts of time delays on SAG, SAGA, Finito, SDCA and other related stochastic optimization methods.

\acks{The authors would like to thank the anonymous reviewers
for their constructive comments. Bin Hu and Peter Seiler were supported by the National Science Foundation under Grant
No. NSF-CMMI-1254129 entitled “CAREER: Probabilistic Tools for High
Reliability Monitoring and Control of Wind Farms.”  Bin Hu and Peter Seiler were also
supported by the NASA Langley NRA Cooperative Agreement NNX12AM55A
entitled “Analytical Validation Tools for Safety Critical Systems
Under Loss-ofControl Conditions”, Dr. Christine Belcastro technical
monitor. Anders Rantzer is a member of the LCCC Linnaeus Center and the ELLIIT Excellence Center at Lund University. His contribution was supported by the Swedish Research Council, grant 2016-04764, and the Institute for Mathematics and its Applications at University of Minnesota. }



\bibliography{IQCandSOS}

\appendix

\section{Jump System Formulations of SAG, Finito, and SDCA}
\label{sec:JumpF}
\begin{enumerate}
\item (SAG):  Define $w^k=\bmtx \nabla f_1(x^k)^T \cdots \nabla f_n(x^k)^T \emtx^T$, and then the SAG gradient update rule~\eqref{eq:SAGy} can still be rewritten as \eqref{eq:SAGyJump}. Notice $\sum_{i=1}^n y_i^{k} = ( e^T \otimes I_p) y^{k}$ and $\nabla f_{i_k}(x^k)-y_{i_k}^k=(e_{i_k}^T\otimes I_p)(w^k-y^k)$.
Thus the iteration rule \eqref{eq:SAGx} can be rewritten as follows:
\begin{align}
\label{eq:SAGxJump}
\begin{split}
x^{k+1}&=x^k-\alpha\left(\frac{\nabla f_{i_k}(x^k)-y_{i_k}^k}{n}+\frac{1}{n}\sum_{i=1}^n y_i^k\right)\\
&=x^k-\frac{\alpha}{n}(e_{i_k}^T\otimes I_p) (w^k-y^k)-\frac{\alpha}{n}(e^T\otimes I_p) y^k\\
&=x^k-\frac{\alpha}{n}\left((e-e_{i_k})^T\otimes I_p\right)y^{k}-\frac{\alpha}{n}(e_{i_k}^T\otimes I_p) w^k
\end{split}
\end{align}
At this point, both the gradient update in \eqref{eq:SAGyJump} and the iteration update in \eqref{eq:SAGxJump} depend on $w^k=\bmtx \nabla f_1(x^k)^T \cdots \nabla f_n(x^k)^T \emtx^T$.  The key step in the modeling is
to "separate  out" this nonlinear term. Setting $v^k=x^k$ and then $w^k=\bmtx \nabla f_1(v^k)^T \cdots \nabla f_n(v^k)^T \emtx^T$. Now the update rules in \eqref{eq:SAGyJump} and \eqref{eq:SAGxJump} can be expressed as:
\begin{align}
\label{eq:SAGG}
\begin{split}
\bmtx y^{k+1} \\ x^{k+1}\emtx&=\bmtx (I_n-e_{i_k} e_{i_k}^T)\otimes I_p & \tilde{0} \otimes I_p \\ -\frac{\alpha}{n}(e-e_{i_k})^T\otimes I_p & I_p \emtx \bmtx y^{k} \\ x^k \emtx+\bmtx(e_{i_k} e_{i_k}^T)\otimes I_p \\ (-\frac{\alpha}{n}e_{i_k}^T)\otimes I_p \emtx w^k\\
v^k&=\bmtx \tilde{0}^T \otimes I_p& I_p \emtx \bmtx y^{k}\\ x^k\emtx\\
w^k&=\bmtx \nabla f_1(v^k) \\ \vdots \\ \nabla f_n(v^k) \emtx
\end{split}
\end{align}
which is exactly in the form of the general jump system model \eqref{eq:spfdJump0} with $\xi^k = \bsmtx y^k \\  x^k\esmtx$. Recall that $w^*=\bmtx \nabla f_1(x^*)^T &\ldots &\nabla f_n(x^*)^T\emtx^T$. It is trivial to set $\xi^*=\bmtx (w^*)^T & (x^*)^T\emtx^T$, and verify that \eqref{eq:spfdJumpEqi} holds. 
\item (Finito): Recall that we denote $y^k=\bmtx (y_1^k)^T  & \cdots  & (y_n^k)^T \emtx^T$ and $x^k=\bmtx (x_1^k)^T  & \cdots  & (x_n^k)^T \emtx^T$. We set $v^k$ as
\begin{align}
\label{eq:Finitov} 
v^k= \frac{1}{n}\sum_{i=1}^n x_i^k-\alpha\sum_{i=1}^n  y_i^k
\end{align}
Again, we set $w^k=\bmtx \nabla f_1(v^k)^T \cdots \nabla f_n(v^k)^T \emtx^T$. Then we can immediately rewrite \eqref{eq:Finitoy} as
\begin{align}
\label{eq:FinitoyJ}
y^{k+1}=\left((I_n-e_{i_k} e_{i_k}^T)\otimes I_p\right) y^{k}+ \left((e_{i_k} e_{i_k}^T)\otimes I_p\right) w^k
\end{align}
It is also straightforward to rewrite \eqref{eq:Finitox} as
\begin{align}
\label{eq:FinitoxJ}
x^{k+1}=\left((I_n-e_{i_k} e_{i_k}^T+\frac{1}{n}(e_{i_k}e^T))\otimes I_p\right) x^{k}-\alpha\left((e_{i_k} e^T)\otimes I_p\right) y^k
\end{align}
Therefore, we can combine \eqref{eq:Finitov}, \eqref{eq:FinitoyJ}, and \eqref{eq:FinitoxJ} to obtain
\begin{align}
\begin{split}
\bmtx y^{k+1} \\ x^{k+1}\emtx&=\bmtx (I_n- e_{i_k} e_{i_k}^T)\otimes I_p & (\tilde{0}\tilde{0}^T) \otimes I_p \\  -\alpha (e_{i_k}e^T)\otimes I_p &  (I_n- e_{i_k} e_{i_k}^T+\frac{1}{n}(e_{i_k}e^T))\otimes I_p \emtx \bmtx y^{k} \\ x^{k} \emtx+\bmtx( e_{i_k} e_{i_k}^T)\otimes I_p \\ (\tilde{0}\tilde{0}^T)\otimes I_p \emtx w^k\\
v^k&=\bmtx  -\alpha e^T\otimes I_p  & \frac{1}{n}e^T\otimes I_p \emtx \bmtx y^{k}\\ x^k\emtx\\
w^k&=\bmtx \nabla f_1(v^k) \\ \vdots \\ \nabla f_n(v^k) \emtx
\end{split}
\end{align}
which is exactly in the form of the general jump system model \eqref{eq:spfdJump0} with $\xi^k = \bsmtx y^k \\  x^k\esmtx$.

Notice $\xi^k\in \R^{(n+1)p}$ for SAG and SAGA, but $\xi^k\in \R^{2np}$ for Finito. Hence in general,  Finito requires more memory compared with SAG and SAGA. Based on the fact $\sum_{i=1}^n \nabla f_i(x^*)=0$, we can set $\xi^*=\bmtx w^* \\ e \otimes x^* \emtx$, and verify that \eqref{eq:spfdJumpEqi}  holds. Therefore, if $\xi^k$ converges to $\xi^*$, then $y_i^k$ converges to $\nabla f_i(x^*)$ and $x_i^k$ converges to $x^*$.

\item (SDCA): We still have $y^k=\bmtx (y_1^k)^T  & \cdots  & (y_n^k)^T \emtx^T$. The update rule \eqref{eq:SDCAx} can be rewritten as
\begin{align}
\label{eq:SDCAxJ}
x^k=\frac{1}{m n}(e^T\otimes I_p) y^k
\end{align}
Again, $w^k=\bmtx \nabla f_1(v^k)^T \cdots \nabla f_n(v^k)^T \emtx^T$. Hence we can set $v^k=x^k$ and rewrite the update rule \eqref{eq:SDCAy} as
\begin{align}
\label{eq:SDCAyJ}
y^{k+1}=\left((I_n-\alpha m n e_{i_k} e_{i_k}^T)\otimes I_p\right) y^{k}-\alpha mn \left((e_{i_k} e_{i_k}^T)\otimes I_p\right) w^k
\end{align}
We can augment \eqref{eq:SDCAxJ} and \eqref{eq:SDCAyJ} as
\begin{align}
\label{eq:SDCAJ}
\begin{split}
y^{k+1}&=\left((I_n-\alpha m n e_{i_k} e_{i_k}^T)\otimes I_p\right) y^{k}-\alpha m n \left((e_{i_k} e_{i_k}^T)\otimes I_p\right) w^k\\
v^k&=\left(\frac{1}{mn}e^T\otimes I_p \right) y^k\\
w^k&=\bmtx \nabla f_1(v^k) \\ \vdots \\ \nabla f_n(v^k) \emtx
\end{split}
\end{align}
which is exactly in the form of the general jump system model \eqref{eq:spfdJump0} with
$\xi^k=y^k$.  Notice the state $\xi^k$ is completely determined by $y^k$, and does not directly depend on $x^k$.

\end{enumerate}

\section{Numerical Tests Using the LMI Condition in Theorem \ref{thm:lmiS0}}
\label{sec:numerical}
We can numerically solve LMI \eqref{eq:lmiIQCS0} in Theorem \ref{thm:lmiS0} and get some rough ideas of the feasibility of the proposed LMI conditions.

First, we apply the proposed LMI condition to analyze the convergence rate of SAGA. 
The most relevant existing result for this case was presented in \citet[Section 2]{defazio2014} and states the following fact. Under the assumption that $g\in \mathcal{F}(m,L)$ and $f_i\in \mathcal{F}(m,L)$,  the SAGA iteration with the stepsize $\alpha=\frac{1}{3L}$ converges at a linear rate $\rho=\sqrt{1-\min\{\frac{m}{3L},\frac{1}{4n}\}}$ in the mean square sense. 
Therefore, for any $m$, $L$, and $n$, we can choose $\rho=\sqrt{1-\min\{\frac{m}{3L},\frac{1}{4n}\}}$ and numerically test the feasibility of the resultant LMI \eqref{eq:lmiIQCS0} using CVX~\citep{cvx,gb08}
with the solver SDPT3~\citep{sdpt3_03,sdpt3_99}.  As discussed before, we should set $\nu=\gamma=-m$ to reflect the assumptions $g\in \mathcal{F}(m,L)$ and $f_i\in \mathcal{F}(m,L)$. A practical issue is that the LMI is homogeneous, i.e. if $(\tilde{P},\lambda_1,\lambda_2)$ is a feasible solution then $(c \tilde{P},c\lambda_1,c\lambda_2)$ is also a feasible solution for any $c>0$.  This homogeneity can cause numerical issues.  One method to break this homogeneity is to replace $\tilde{P}>0$ with the condition $\tilde{P} \ge 10^{-2} I$.
 Based on some preliminary feasibility tests with relatively small $n$ ($n<100$), the proposed LMI remains feasible  even if  the following simple parameterization of $\tilde{P}$ is used 
\begin{align}
\label{eq:SAGAP}
\tilde{P}=\bmtx p_1 I_n  & \tilde{0} \\ \tilde{0}^T & p_2\emtx
\end{align}
We notice that LMI \eqref{eq:lmiIQCS0} seems always feasible with the choice of  $\rho=\sqrt{1-\min\{\frac{m}{3L},\frac{1}{4n}\}}$. This numerically confirms the existing rate result for $n$ being up to several hundred. We further notice that the LMI can be feasible with $\rho^2$ smaller than $1-\min\{\frac{m}{3L},\frac{1}{4n}\}$. This indicates that one may get sharper rate bounds for SAGA using our proposed LMI. Finally, treating $\tilde{P}$ as an unknown matrix or parameterizing $\tilde{P}$ as \eqref{eq:SAGAP} often does not change the feasibility of the resultant LMI. This implies that adopting the parameterization \eqref{eq:SAGAP} does not introduce further conservatism into our analysis.

Similar testing can also be performed if $f_i$ is only assumed to be $L$-smooth.  We only need to modify the value of $\gamma$ to be $L$. The numerical results suggest that using a simple parameterization \eqref{eq:SAGAP} does not introduce further conservatism in this case.
We can also perform such naive numerical analysis for SDCA, Finito and SAG for $n$ being up to several hundred. The numerical results obtained by the proposed semidefinite programs actually inspire our analytical proofs for SAGA, SDCA, and Finito.

\section{Proof of Theorem~\ref{thm:LMISector}}
\label{sec:ProofDR}
The proof is based on the following key  linear algebra result which can be used to transform certain high dimensional LMIs into two much smaller coupled LMIs.

\begin{lemma}
\label{lem:KeyLA}
The following statements are true:
\begin{enumerate}
\item  $\mu_{1} I_n +q_{1} ee^T>0$ if and only if $\mu_{1} > 0$ and $\mu_{1}+n q_{1} >0$.
\item 
\begin{align}
\bmtx \mu_{1} I_n +q_1 ee^T & \mu_3 I_n +q_3 ee^T \\ \mu_3 I_n +q_3 ee^T & \mu_2 I_n+q_2 ee^T\emtx\le 0
\end{align}
if and only if
\begin{align}
&\bmtx \mu_1 & \mu_3 \\ \mu_3 & \mu_2\emtx \le 0, \\
\bmtx \mu_1 & \mu_3 \\ \mu_3 & \mu_2\emtx& + n \bmtx q_1 & q_3 \\ q_3 & q_2 \emtx  \le 0
\end{align}
\item 
\begin{align}
\bmtx \mu_{1} I_n +q_1 ee^T & q_4 e & \mu_6 I_n+q_6 ee^T\\ q_4 e^T & \mu_2  &  q_5 e^T \\ \mu_6 I_n+q_6 ee^T & q_5 e & \mu_3 I_n+ q_3 ee^T\emtx\le 0
\end{align}
if and only if
\begin{align}
\bmtx \mu_1 & 0 & \mu_6 \\ 0 & \mu_2 & 0 \\ \mu_6 & 0 & \mu_3 \emtx \le 0, \\
\bmtx \mu_1 +n q_1& \sqrt{n}q_4 & \mu_6 + nq_6 \\ \sqrt{n}\mu_4 & \mu_2 & \sqrt{n}q_5 \\ \mu_6+n q_6 & \sqrt{n}q_5 & \mu_3+nq_3 \emtx&   \le 0
\end{align}
\item 
\begin{align}
\bmtx \mu_{1} I_n +q_1 ee^T & \mu_4 I_n +q_4 ee^T & \mu_6 I_n+q_6 ee^T\\ \mu_4 I_n +q_4 ee^T & \mu_2 I_n+q_2 ee^T & \mu_5 I_n+q_5 ee^T \\ \mu_6 I_n+q_6 ee^T & \mu_5 I_n +q_5 ee^T & \mu_3 I_n+ q_3 ee^T\emtx\le 0
\end{align}
if and only if
\begin{align}
&\bmtx \mu_1 & \mu_4 & \mu_6 \\ \mu_4 & \mu_2 & \mu_5 \\ \mu_6 & \mu_5 & \mu_3 \emtx \le 0, \\
\bmtx \mu_1 & \mu_4 & \mu_6  \\ \mu_4 & \mu_2 & \mu_5 \\ \mu_6 & \mu_5 & \mu_3 \emtx& + n \bmtx q_1 & q_4 & q_6  \\ q_4 & q_2 & q_5 \\ q_6 & q_5 & q_3 \emtx  \le 0
\end{align}
\end{enumerate} 
\end{lemma}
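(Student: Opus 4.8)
The plan is to exploit the fact that every block appearing in these matrices is an affine function of the single rank-one matrix $ee^T$, so all blocks are simultaneously diagonalized by one fixed orthogonal change of basis. Concretely, I would fix an orthogonal matrix $U\in\R^{n\times n}$ whose first column is $\frac{1}{\sqrt{n}}e$ and whose remaining $n-1$ columns form an orthonormal basis of the orthogonal complement $e^\perp$. Then $U^T I_n U=I_n$ and $U^T(ee^T)U=n\,e_1e_1^T$, so each block transforms as $U^T(\mu I_n+q\,ee^T)U=\mathrm{diag}(\mu+nq,\mu,\ldots,\mu)$. The single scalar identity that drives everything is $U^Te=\sqrt{n}\,e_1$, which I would record at the outset. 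For Statement 1 this is immediate: congruence by $U$ preserves definiteness, and $\mathrm{diag}(\mu_1+nq_1,\mu_1,\ldots,\mu_1)>0$ holds exactly when $\mu_1>0$ and $\mu_1+nq_1>0$.

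For Statements 2 and 4 I would write the whole block matrix using Kronecker products, e.g. in the $2\times 2$ case $M=\Lambda_\mu\otimes I_n+\Lambda_q\otimes ee^T$ with $\Lambda_\mu=\bsmtx \mu_1 & \mu_3\\ \mu_3 & \mu_2\esmtx$ and $\Lambda_q=\bsmtx q_1 & q_3\\ q_3 & q_2\esmtx$ (and the analogous $3\times 3$ matrices for Statement 4). Applying the congruence $I_k\otimes U$ with $k=2,3$ and using $(A\otimes B)(C\otimes D)=(AC)\otimes(BD)$ yields $\Lambda_\mu\otimes I_n+\Lambda_q\otimes(n\,e_1e_1^T)$. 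A permutation that groups coordinates by their $U$-index then block-diagonalizes this into one copy of $\Lambda_\mu+n\Lambda_q$ (the $e$-direction, of multiplicity one) and $n-1$ copies of $\Lambda_\mu$ (the $e^\perp$ directions). Since congruence and permutation both preserve the sign of a symmetric matrix, $M\le 0$ iff $\Lambda_\mu\le 0$ and $\Lambda_\mu+n\Lambda_q\le 0$, which is exactly the claim.

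The main obstacle is Statement 3, whose middle block is a scalar and whose off-diagonal blocks are the vectors $q_4e$ and $q_5e$, so the uniform Kronecker structure is unavailable. Here I would use the non-uniform congruence $\mathrm{diag}(U,1,U)$. The diagonal $n\times n$ blocks transform as before, while the vector blocks become $U^T(q_4e)=\sqrt{n}\,q_4\,e_1$ and $(q_5e^T)U=\sqrt{n}\,q_5\,e_1^T$; the crucial point is that $U^Te=\sqrt{n}e_1$ routes all of the coupling into the single first $U$-coordinate. After the same regrouping permutation, the first $U$-coordinates of the two outer blocks couple with the scalar to give the aggregate matrix $\bsmtx \mu_1+nq_1 & \sqrt{n}q_4 & \mu_6+nq_6\\ \sqrt{n}q_4 & \mu_2 & \sqrt{n}q_5\\ \mu_6+nq_6 & \sqrt{n}q_5 & \mu_3+nq_3\esmtx$, while the remaining $n-1$ coordinates decouple into copies of $\bsmtx \mu_1 & \mu_6\\ \mu_6 & \mu_3\esmtx$.

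Finally I would reconcile the decoupled part with the displayed reduced condition: requiring $\bsmtx \mu_1 & \mu_6\\ \mu_6 & \mu_3\esmtx\le 0$ is equivalent to $\bsmtx \mu_1 & 0 & \mu_6\\ 0 & \mu_2 & 0\\ \mu_6 & 0 & \mu_3\esmtx\le 0$, because the only extra condition the latter encodes, $\mu_2\le 0$, is already forced by the $(2,2)$-entry of the aggregate $3\times 3$ block. The genuinely fiddly step is the bookkeeping of this permutation in the mixed scalar/vector setting and the verification that the scalar row and column attach only to the first $U$-coordinate; every other statement is the same diagonalization argument repeated at a different size.
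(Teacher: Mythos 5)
Your proposal is correct, and for Statements 1, 2, and 4 it is essentially identical to the paper's argument: both fix an orthogonal matrix $\bmtx \frac{e}{\sqrt{n}} & Q\emtx$ (your $U$), use the congruence $U^T(\mu I_n+q\,ee^T)U=\mathrm{diag}(\mu+nq,\mu,\ldots,\mu)$ blockwise, and regroup coordinates to split the big LMI into one copy of $\Lambda_\mu+n\Lambda_q$ and $n-1$ copies of $\Lambda_\mu$. The only place you diverge is Statement 3: the paper disposes of it in one line by saying it follows from Statement 2 together with a Schur complement argument (eliminating the scalar middle row and column so that the remaining $2n\times 2n$ block falls under Statement 2), whereas you attack it directly with the mixed congruence $\mathrm{diag}(U,1,U)$, observing that $U^Te=\sqrt{n}\,e_1$ routes the vector couplings $q_4e$ and $q_5e$ entirely into the first $U$-coordinate. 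Your route is somewhat more explicit: it produces the reduced $3\times 3$ aggregate matrix (and in particular the $\sqrt{n}$ factors on $q_4$ and $q_5$) by direct computation rather than by unwinding a Schur complement, and your closing remark correctly reconciles the $(n-1)$-fold block $\bsmtx \mu_1 & \mu_6\\ \mu_6 & \mu_3\esmtx\le 0$ with the displayed condition via the observation that $\mu_2\le 0$ is already forced by the $(2,2)$-entry of the aggregate matrix. (Incidentally, your computation confirms that the $(2,1)$-entry written as $\sqrt{n}\mu_4$ in the statement of the lemma is a typo for $\sqrt{n}q_4$.) The trade-off is minor: the paper's Schur-complement route reuses Statement 2 and avoids the bookkeeping of a non-uniform congruence, while yours is self-contained and makes the structure of the reduced LMI transparent.
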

\begin{proof}
Let $Q\in \R^{n\times (n-1)}$ be a matrix such that $\bmtx \frac{e}{\sqrt{n}} & Q\emtx$ is orthogonal. Then
\begin{align}
\bmtx \frac{e}{\sqrt{n}} & Q\emtx^T (\mu_1 I_n+q_1 ee^T) \bmtx \frac{e}{\sqrt{n}} & Q\emtx= \mbox{diag}(\mu_1+n q_1, \mu_1, \ldots, \mu_1)
\end{align}
Statement 1 directly follows since $\bmtx \frac{e}{\sqrt{n}} & Q\emtx$ is invertible. 
Similarly, Statement 2 can be immediately proved using the following fact:
\begin{align}
 &\bmtx \frac{e}{\sqrt{n}} & \tilde{0} & Q & 0Q\\ \tilde{0} & \frac{e}{\sqrt{n}} & 0Q & Q\emtx^T  \bmtx \mu_{1} I_n +q_1 ee^T & \mu_3 I_n +q_3 ee^T \\ \mu_3 I_n +q_3 ee^T & \mu_2 I_n+q_2 ee^T\emtx \bmtx \frac{e}{\sqrt{n}} & \tilde{0} & Q & 0Q\\ \tilde{0} & \frac{e}{\sqrt{n}} & 0Q & Q\emtx\\
= &\mbox{diag}\left(\bmtx \mu_1+n q_1 & \mu_3+nq_3\\ \mu_3+nq_3 & \mu_2+nq_2\emtx, \bmtx \mu_1 & \mu_3 \\ \mu_3 & \mu_2 \emtx \otimes I_{n-1}\right)
\end{align}
Statement 4 can be proved using a similar argument.
Finally, Statement 3 can be proved using Statement 2 and a Schur complement argument.
\end{proof}
When analyzing SDCA, we can apply Statement 2 of the above lemma to convert LMI \eqref{eq:lmiIQCS0} into two coupled $2\times 2$ LMIs whose feasibility can be checked analytically. Similarly,
Statement 3 of the above lemma  is useful  for the rate analysis of SAGA, and Statement 4 of the above lemma is useful for the rate analysis of Finito. Now we only need to substitute $(\tilde{A}_i, \tilde{B}_i, \tilde{C})$ and $\tilde{P}$ into the left side of \eqref{eq:lmiIQCS0}, and then Theorem \ref{thm:LMISector} directly follows from the above lemma.

\begin{enumerate}
\item To prove Statement 1 of Theorem \ref{thm:LMISector}, recall that we have
$\tilde{P}=\bmtx p_1 I_n & \tilde{0} \\ \tilde{0}^T & p_2\emtx$.
For SAGA, it is straightforward to verify
\begin{align}
\frac{1}{n}\sum_{i=1}^n\tilde{A}_i\tilde{P} \tilde{A}_i&=\bmtx (\frac{p_2 \alpha^2}{n}+\frac{n-1}{n}p_1)I_n -\frac{\alpha^2 p_2 }{n^2}ee^T& \tilde{0}\\ \tilde{0}^T & p_2 \emtx\\
\frac{1}{n}\sum_{i=1}^n\tilde{B}_i\tilde{P} \tilde{A}_i&=\bmtx -\frac{\alpha^2 p_2}{n}I_n+\frac{\alpha^2 p_2}{n^2}ee^T \\[2mm] -\frac{\alpha p_2}{n} e^T \emtx\\
\frac{1}{n}\sum_{i=1}^n\tilde{B}_i\tilde{P} \tilde{B}_i&=\frac{p_1+\alpha^2 p_2}{n}I_n
\end{align}

In addition, we have 
\begin{align}
\begin{split}
 \bmtx  \tilde{C}^T\tilde{D}_{\psi 1}^T \\ \tilde{D}_{\psi 2}^T \emtx  &\left(\bmtx \lambda_1 & \tilde{0}^T \\ \tilde{0} & \frac{\lambda_2}{n} I_n \emtx \otimes  \bmtx 0 & 1\\ 1 & 0\emtx\right)
  \bmtx \tilde{D}_{\psi 1}\tilde{C} & \tilde{D}_{\psi2} \emtx =\\
&\lambda_1 \bmtx 0_n & \tilde{0} & 0_n \\ \tilde{0}^T & -2mL & \frac{m+L}{n} e^T \\ 0_n & \frac{m+L}{n}e & -\frac{2}{n^2}ee^T \emtx +\lambda_2 \bmtx 0_n & \tilde{0} & 0_n \\ \tilde{0}^T & 2L\gamma & \frac{L-\gamma}{n} e^T \\ 0_n & \frac{L-\gamma}{n}e & -\frac{2}{n}I_n \emtx
\end{split}
  \end{align}

Now we can directly prove Statement 1 of Theorem \ref{thm:LMISector} by applying Statement 3 of Lemma \ref{lem:KeyLA} to convert \eqref{eq:lmiIQCS0} into small coupled LMIs.

\item
To prove Statement 2 of  Theorem \ref{thm:LMISector},  recall that we have
\begin{align}
\tilde{P}=\bmtx p_1 I_n+p_2 ee^T & p_3 ee^T \\ p_3 ee^T & p_4 I_n+ p_5 ee^T\emtx 
\end{align}

Hence it is straightforward to verify:
\begin{align}
&\frac{1}{n}\sum_{i=1}^n\tilde{A}_i\tilde{P} \tilde{A}_i=\bmtx W_{11} & W_{12}\\ W_{12}^T & W_{22} \emtx\\
&\frac{1}{n}\sum_{i=1}^n\tilde{B}_i\tilde{P} \tilde{A}_i=\bmtx -\frac{p_2}{n}I_n+\frac{1}{n}(p_2-  p_3\alpha)ee^T \\[3mm] -\frac{p_3}{n}I_n+\frac{(n+1)p_3}{n^2} ee^T \emtx\\
&\frac{1}{n}\sum_{i=1}^n\tilde{B}_i\tilde{P} \tilde{B}_i=\frac{p_1+p_2}{n}I_n
\end{align}
where $W_{11}$, $W_{12}$ and $W_{22}$ are computed as
\begin{align}
W_{11}&= \left(\frac{p_2}{n}+\frac{n-1}{n}p_1\right)I_n+\left((1-\frac{2}{n})p_2-2(1-n^{-1})p_3\alpha+(p_4+p_5)\alpha^2 \right)ee^T\\
W_{12}&=\frac{p_3}{n} I_n+\frac{(n-1-n^{-1}) p_3-p_4\alpha-n p_5\alpha}{n} ee^T\\
W_{22}&=\left(\frac{p_5}{n}+(1-\frac{1}{n})p_4\right)I_n+\left(\frac{p_4}{n^2}+(1-\frac{1}{n^2})p_5\right)ee^T 
\end{align}
Then we can combine Statement 4 of Lemma \ref{lem:KeyLA} with the following formula to prove Statement~2 of Theorem \ref{thm:LMISector}.
\end{enumerate}
\begin{align}
\begin{split}
  &\bmtx  \tilde{C}^T\tilde{D}_{\psi 1}^T \\ \tilde{D}_{\psi 2}^T \emtx  \left(\bmtx \lambda_1 & \tilde{0}^T \\ \tilde{0} & \frac{\lambda_2}{n} I_n \emtx \otimes  \bmtx 0 & 1\\ 1 & 0\emtx\right)
  \bmtx \tilde{D}_{\psi 1}\tilde{C} & \tilde{D}_{\psi2} \emtx =\\
  & \lambda_1 \bmtx -2Lm\alpha^2ee^T & \frac{2Lm\alpha}{n}ee^T & -\frac{(m+L)\alpha}{n} ee^T \\[2mm] \frac{2Lm\alpha}{ n}ee^T & -\frac{2mL}{n^2}ee^T & \frac{L+m}{n^2} ee^T \\[2mm]   -\frac{(m+L)\alpha}{n}ee^T &  \frac{L+m}{n^2}ee^T & -\frac{2}{n^2}ee^T \emtx+\lambda_2 \bmtx 2L\gamma\alpha^2ee^T & -\frac{2L\gamma\alpha}{n}ee^T & -\frac{(L-\gamma)\alpha}{n} ee^T \\[2mm] -\frac{2L\gamma\alpha}{n}ee^T & \frac{2L\gamma}{n^2}ee^T & \frac{L-\gamma}{n^2} ee^T \\[2mm]   -\frac{(L-\gamma)\alpha}{n}ee^T &  \frac{L-\gamma}{n^2}ee^T & -\frac{2}{n}I_n \emtx
\end{split}
  \end{align}

\begin{enumerate} \addtocounter{enumi}{2}
\item To prove Statement 3 of Theorem \ref{thm:LMISector},   we have  $\tilde{P}=p_1 I_n +p_2 ee^T$ and $\tilde{\alpha}=\alpha m n$.
Hence it is straightforward to obtain the following formulas:

\begin{align}
\frac{1}{n}\sum_{i=1}^n\tilde{A}_i\tilde{P} \tilde{A}_i&=\left(\frac{p_1(\tilde{\alpha}^2-2\tilde{\alpha}+n)}{n}+\frac{p_2\tilde{\alpha}^2}{n}\right)I_n-\frac{p_2(2\tilde{\alpha}-n)}{n}ee^T\\
\frac{1}{n}\sum_{i=1}^n\tilde{B}_i\tilde{P} \tilde{A}_i&=\left(\frac{p_1(\tilde{\alpha}^2-\tilde{\alpha})}{n}+\frac{p_2\tilde{\alpha}^2}{n}\right)I_n-\frac{\tilde{\alpha} p_2}{n}ee^T\\
\frac{1}{n}\sum_{i=1}^n\tilde{B}_i\tilde{P} \tilde{B}_i&=\frac{(p_1+ p_2)\tilde{\alpha}^2}{n}I_n
\end{align}

In addition,  we can directly obtain
\begin{align}
\begin{split}
 \bmtx  \tilde{C}^T\tilde{D}_{\psi 1}^T \\ \tilde{D}_{\psi 2}^T \emtx  &\left(\bmtx \lambda_1 & \tilde{0}^T \\ \tilde{0} & \frac{\lambda_2}{n} I_n \emtx \otimes  \bmtx 0 & 1\\ 1 & 0\emtx\right)
  \bmtx \tilde{D}_{\psi 1}\tilde{C} & \tilde{D}_{\psi2} \emtx =\\
&\lambda_1 \bmtx 0_n & \frac{L}{mn^2}ee^T \\[2mm] \frac{L}{mn^2}ee^T & -\frac{2}{n^2}ee^T \emtx +\lambda_2 \bmtx  \frac{2L\gamma}{m^2n^2}ee^T & \frac{L-\gamma}{mn^2}e e^T \\[2mm]  \frac{L-\gamma}{mn^2}e e^T  & -\frac{2}{n}I_n \emtx
\end{split}  
\end{align}
Now Statement 3 of Theorem \ref{thm:LMISector} directly follows from Statement 2 of Lemma \ref{lem:KeyLA}.
\end{enumerate}
Now the proof of Theorem \ref{thm:LMISector} is complete.

%
%
%

\end{document}